\documentclass{article}

\PassOptionsToPackage{numbers,sort&compress}{natbib}
\usepackage[preprint]{neurips_2026}

\usepackage[utf8]{inputenc}
\usepackage[T1]{fontenc}
\usepackage{hyperref}
\usepackage{url}
\usepackage{booktabs}
\usepackage{amsmath}
\usepackage{amssymb}
\usepackage{amsthm}
\usepackage{graphicx}
\usepackage{enumitem}
\usepackage[table]{xcolor}
\usepackage{microtype}
\usepackage{placeins}
\usepackage{capt-of}
\usepackage{needspace}
\usepackage{xfp}
\usepackage{pifont}
\usepackage{enumitem}
\usepackage{mathtools}

\providecommand{\norm}[1]{\left\lVert #1 \right\rVert}

\definecolor{pdePurpleDark}{HTML}{4B2E83}
\definecolor{pdeRowBlue}{HTML}{EEF3F8}
\hypersetup{
  colorlinks=true,
  linkcolor=pdePurpleDark,
  citecolor=pdePurpleDark,
  urlcolor=pdePurpleDark
}
\newcommand{\sci}[2]{\fpeval{round((#1)*10^(#2), 6)}}

\newtheorem{theorem}{Theorem}[section]
\newtheorem{lemma}[theorem]{Lemma}
\newtheorem{proposition}[theorem]{Proposition}

\newtheorem{remark}[theorem]{Remark}

\title{Discovering Physical Directions in Weight Space:\\
Composing Neural PDE Experts}

\author{
  Pengkai Wang\textsuperscript{1,*},
  Pengwei Liu\textsuperscript{2,*},
  Yuanyi Wang\textsuperscript{1,*},
  Guanyu Chen\textsuperscript{2},
  Xingyu Ren\textsuperscript{2},\\
  \textbf{Xiaolong Li}\textsuperscript{2},
  \textbf{Zhongkai Hao}\textsuperscript{3},
  \textbf{Yuting Kong}\textsuperscript{2},
  \textbf{Qixin Zhang}\textsuperscript{4,\textdagger},
  \textbf{Dong Ni}\textsuperscript{2,\textdagger}\\
  \textsuperscript{1}The Hong Kong Polytechnic University,
  \textsuperscript{2}Zhejiang University,
  \textsuperscript{3}Tsinghua University,\\
  \textsuperscript{4}Nanyang Technological University\\
  \textsuperscript{*}Equal contribution.
  \textsuperscript{\textdagger}Corresponding author.\\
}

\begin{document}

\maketitle

\begin{abstract}
Recent advances in neural operators have made partial differential equation (PDE) surrogate modeling increasingly scalable and transferable through large-scale pretraining and in-context adaptation.
However, after a shared operator is fine-tuned to multiple regimes within a continuous physical family, it remains unclear whether the resulting weight-space updates merely form isolated regime experts or reveal reusable physical structure.
Starting from a shared family anchor, we fine-tune low- and high-regime endpoint experts and show that their updates can be separated into a family-shared adaptation and a direction aligned with the underlying physical parameter.
This separation reinterprets endpoint experts as finite-difference probes of a local physical direction in weight space, explaining why static averaging can interpolate between regimes but attenuates endpoint-specific physics.
Building on this perspective, we propose \textbf{Calibration-Conditioned Merge (CCM)}, a post-hoc coordinate readout method for composing neural PDE experts along this physical direction.
Given physical metadata, a calibrated coordinate mapping, or a short observed rollout prefix, CCM infers the target composition coordinate and deploys a single merged checkpoint for the remaining rollout.
This enables training-free regime transfer without ensembling, routing, or target-regime-specific fine-tuning.
We evaluate CCM on the reaction--diffusion system, viscosity-parameterized two-dimensional Navier--Stokes equations, and radial dam-break dynamics.
Across these benchmarks, CCM achieves its strongest gains in extrapolative regimes, reducing out-of-distribution rollout error relative to the family anchor by 54.2\%, 42.8\%, and 13.8\%, respectively.
Further experiments across FNO scales, a DPOT-style backbone, and ablations confirm that endpoint fine-tuning is not arbitrary checkpoint drift, but reveals a calibratable physical direction for training-free transfer across PDE regimes.
\end{abstract}

\section{Introduction}
\label{sec:intro}

Neural operators have emerged as a key methodological framework for constructing efficient surrogate solvers for parametric partial differential equations (PDEs)~\cite{li2021fourier,azizzadenesheli2024neural,hao2023gnot,cao2024laplace,li2020neural,zheng2024alias,liu2025aerogto,liu2025efficient,ren2026foundation,renuncertainty}. 
Whereas early work often targeted narrow equation classes and limited parameter ranges, real scientific problems~\cite{kurth2023fourcastnet,lam2023graphcast,li2021fourier,lu2021learning,kovachki2023neural} require surrogate solvers that remain accurate and stable under changes in material and model coefficients, domain geometry, boundary conditions, spatial and temporal resolution, and overall simulation regime.
To improve such transferability, recent work has advanced PDE foundation models~\cite{hao2024dpot,herde2024poseidon,rahman2024pretraining,mccabe2024mpp,mccabe2025walrus,kochkov2024neuralgcm,bodnar2025aurora} as well as physics-oriented benchmarks and datasets~\cite{takamoto2022pdebench,ohana2024thewell,koehler2024apebench} spanning diverse governing equations, geometries, and physical regimes.
These developments make neural operators increasingly scalable and reusable, but they leave open a basic question about adaptation: after a shared operator is fine-tuned to multiple regimes within the same physical family, do the resulting weight updates merely define isolated regime experts, or do they reveal reusable physical structure?

Adaptation across physical regimes remains difficult.
A single shared base solver may perform well near its training distribution, but its rollout accuracy often deteriorates when key problem parameters, such as viscosity or diffusivity, move into out-of-distribution (OOD) regimes~\cite{pfaff2021meshgraphnets,boussif2022magnet,goswami2022deeptransfer,li2023geofno,li2023gino,helwig2023group,wang2024latent,yin2024scalable,zhao2025dno}. 
Existing adaptation strategies largely follow two paths.
The first learns conditional operators that take regime information as input~\cite{hao2023gnot,wu2024transolver,hao2024dpot,herde2024poseidon,mccabe2024mpp}, but this requires the architecture, training corpus, and conditioning variables to cover the relevant physical state space with sufficient fidelity.
The second fine-tunes a shared base solver into multiple regime-specific experts~\cite{pfaff2021meshgraphnets,boussif2022magnet,goswami2022deeptransfer,li2023geofno,li2023gino,helwig2023group,wang2024latent,yin2024scalable,zhao2025dno}, which can reuse an existing model but produces a fragmented set of checkpoints and offers no principled way to compose them for unseen regimes.
Generic model-merging methods provide useful tools for combining fine-tuned models~\cite{wortsman2022model,matena2022merging,ilharco2023editing,yadav2023ties,yang2024adamerging,yu2024dare,wang2025model,gu2025infifpo,wang2025infigfusion}, yet they typically treat downstream tasks as unordered and select merge coefficients as validation-tuned weights rather than as physically meaningful coordinates.

In this paper, we revisit expert fine-tuning from a physical-coordinate perspective.
In a parametric PDE family, downstream regimes are not arbitrary tasks but ordered points along an underlying physical axis.
If the ideal solver changes smoothly along this axis, then experts fine-tuned from a common family anchor may trace a local trajectory in weight space.
Endpoint experts can therefore be viewed as finite-difference probes of how the shared solver should change from low to high physical regimes.
This leads to our central question: \textit{do same-anchor PDE experts simply represent isolated checkpoints, or do their endpoint--anchor residuals define a reusable direction aligned with the underlying physical parameter?}

\begin{figure}[t]
\centering
\includegraphics[width=0.92\linewidth]{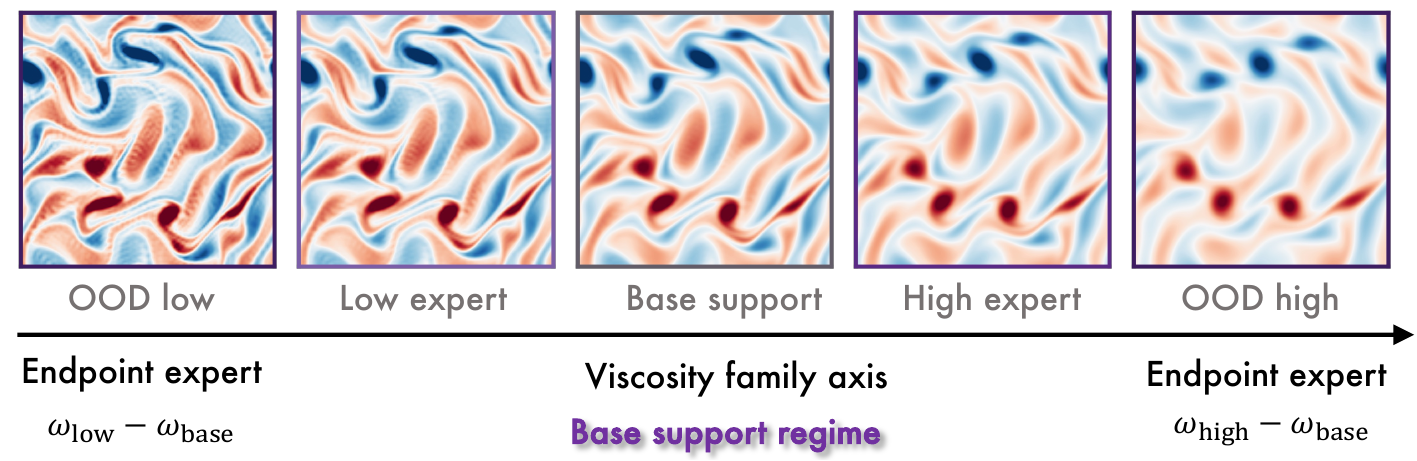}
\caption{\textbf{Endpoint residuals encode physical coordinates.} 
Decomposing endpoint--anchor updates isolates a shared solver adaptation and a signed physical direction. 
This learned direction can align with physical coordinates and support extrapolative rollouts. 
Reversed-ordering controls test whether these gains arise from physical orientation rather than arbitrary interpolation.
}
\label{fig:intro-motivation}
\end{figure}

We answer this question by decomposing low- and high-regime endpoint updates relative to the same anchor.
The mean endpoint residual captures adaptation shared across the PDE family, while the signed residual contrast defines a candidate physical direction in weight space.
This decomposition explains why static averaging can be effective when the shared component dominates, but can also attenuate endpoint-specific physics when the target regime requires a directed shift along the physical axis.
It also turns expert composition into a calibrated coordinate-selection problem: rather than choosing among isolated checkpoints, we can compose a single checkpoint at the coordinate corresponding to the target regime.

Building on this perspective, we propose \textbf{C}alibration-\textbf{C}onditioned \textbf{M}erge (\textbf{CCM}), a post-hoc coordinate readout method for training-free transfer across PDE regimes.
Given either physical metadata, a calibrated coordinate mapping, or a short observed rollout prefix, CCM infers a target composition coordinate along the endpoint-defined direction and deploys one merged checkpoint for the remaining rollout.
The procedure requires no target-regime fine-tuning, dynamic expert routing, or inference-time ensembling.
Thus, CCM is not merely a merging heuristic; it operationalizes the hypothesis that same-anchor endpoint fine-tuning exposes a calibratable physical direction in weight space.

We evaluate CCM on reaction--diffusion dynamics, viscosity-parameterized two-dimensional Navier--Stokes equations, and radial dam-break dynamics.
Across these benchmarks, endpoint residuals consistently separate family-shared solver adaptation from signed physical variation.
CCM achieves its strongest gains in extrapolative regimes, reducing OOD rollout error relative to the family anchor by 54.2\%, 42.8\%, and 13.8\%, respectively.
Further experiments across FNO scales, a DPOT-style backbone, reversed-ordering controls, seed reruns, ablations, and physics diagnostics confirm that the observed gains do not arise from arbitrary checkpoint interpolation.
Instead, endpoint fine-tuning reveals a physically oriented and calibratable weight-space direction for training-free regime transfer.

We summarize the main contributions of this work as follows.
\vspace{-0.25em}
\begin{description}[
    leftmargin=1.35em,
    labelwidth=1.05em,
    labelsep=0.35em,
    itemsep=0.15em,
    topsep=0.25em
]
\item[\textbf{\ding{172}}] \textbf{A physical-coordinate view of PDE expert fine-tuning.}
We formulate same-anchor endpoint experts as finite-difference probes of a local solver trajectory, asking whether endpoint--anchor residuals encode reusable physical structure rather than isolated checkpoint drift.

\item[\textbf{\ding{173}}] \textbf{Calibration-Conditioned Merge.}
We propose CCM, a post-hoc coordinate readout method that composes a single checkpoint from low- and high-regime experts using physical metadata, calibrated coordinates, or a short rollout prefix, without target-regime training, routing, or ensembling.

\item[\textbf{\ding{174}}] \textbf{Evidence for a calibratable physical direction.}
We validate CCM across reaction--diffusion, Navier--Stokes, radial dam break, FNO scales, a DPOT-style backbone, wrong-sign controls, seed reruns, ablations, and physics diagnostics, showing that endpoint fine-tuning reveals a signed physical direction useful for interpolation and extrapolation.
\end{description}
\section{Preliminaries and Problem Setting}
\label{sec:preliminaries}

\noindent\textbf{Parametric PDE family.}
Here we consider time-dependent PDE families on two-dimensional spatial domains
$\Omega\subset\mathbb{R}^{2}$ and a discrete time grid
$0=t_0<t_1<\cdots<t_T$.
The state has $C_u$ physical channels,
$u(t,\cdot):\Omega\to\mathbb{R}^{C_u}$, and the discretized trajectory on an
$H\times W$ grid is
$u_{0:T}\in\mathbb{R}^{(T+1)\times H\times W\times C_u}$.
The initial condition is $x=u_0\in\mathcal{X}$.
Each physical regime is indexed by a scalar family parameter
$\lambda\in\Lambda\subset\mathbb{R}$, such as diffusivity, viscosity, or a
dam-height coordinate in our experiments.
Given $x$, the trajectory solves the compact residual equation
\begin{equation}
    \mathcal{R}_{\lambda}(u;x)=0,
    \qquad
    u(0)=x,
\end{equation}
where $\mathcal{R}_{\lambda}$ collects the PDE operator, coefficients, forcing,
and boundary conditions for regime $\lambda$.
This defines a rollout solution operator
$\mathcal{G}_{\lambda}:\mathcal{X}\to\mathcal{Y}_{T}$,
$\mathcal{G}_{\lambda}(x)=u_{1:T}$, where
$\mathcal{Y}_{T}$ is the future-trajectory space with the same spatial and
channel dimensions.

\noindent\textbf{Rollout evaluation.}
The neural operator with weights $\theta$ induces an autoregressive rollout
operator $\mathcal{F}_{\theta}:\mathcal{X}\to\mathcal{Y}_{T}$, with
$\mathcal{F}_{\theta}(x)=\hat{u}^{\theta}_{1:T}(x)$, whose discretized shape
matches $\mathcal{G}_{\lambda}(x)$.
For an evaluation index set $\mathcal{I}\subseteq\{1,\ldots,T\}$, we measure the mean rollout L2 error
\begin{equation}
    \mathcal{L}_{\lambda,\mathcal{I}}(\theta)
    =
    \mathbb{E}_{x\sim\mathcal{D}_{\lambda}}
    \left[
    \left|
    \mathcal{F}_{\theta}(x)-\mathcal{G}_{\lambda}(x)
    \right|_{\mathcal{I}}
    \right].
\end{equation}
Here $\mathcal{D}_{\lambda}$ is the initial-condition or trajectory
distribution for regime $\lambda$, and $|\cdot|_{\mathcal{I}}$ denotes the
benchmark-normalized rollout L2 over the selected time indices, spatial grid,
and state channels.
The full-rollout L2 used in most tables is
$\mathcal{L}_{\lambda,\mathcal{I}_{\mathrm{full}}}$ with
$\mathcal{I}_{\mathrm{full}}=\{1,\ldots,T\}$.
When a $K$-step calibration prefix is observed, the prefix selector uses
$\mathcal{I}_{\mathrm{cal}}=\{1,\ldots,K\}$ and the reported future-rollout L2
uses $\mathcal{I}_{\mathrm{fut}}=\{K+1,\ldots,T\}$.

\noindent\textbf{Physical-axis extrapolation.}
For each family, we fix two endpoint regimes
$\lambda_{\mathrm{low}}<\lambda_{\mathrm{high}}$ before any target-regime
evaluation.
These two regimes define both the coordinate scale and the endpoint experts
introduced below; interpolation and OOD evaluation targets are not used to
choose them.
We normalize the physical coordinate by
\begin{equation}
    s(\lambda)=c(\lambda)
    =
    \frac{2(\lambda-\lambda_{\mathrm{mid}})}
    {\lambda_{\mathrm{high}}-\lambda_{\mathrm{low}}},
    \qquad
    \lambda_{\mathrm{mid}}
    =
    \frac{\lambda_{\mathrm{low}}+\lambda_{\mathrm{high}}}{2}.
\end{equation}
With this normalization, the endpoint regimes, and hence their experts, are at
$s=\pm1$, interpolation has $|s|\le1$, and extrapolation $|s|>1$.
We use OOD only for extrapolation along this physical axis, not arbitrary distribution shift.
An OOD-high target lies beyond the high-parameter expert.
This ties the merge coordinate to an external physical reference, independent of learned weights or validation loss.

\noindent\textbf{Controlled same-base setting.}
A base neural operator with weights $\theta_0$ is trained on support regimes $\Lambda_{\mathrm{sup}}\subset\Lambda$.
From this base, the endpoint experts $\theta_{\mathrm{low}}$ and
$\theta_{\mathrm{high}}$ are obtained by fine-tuning only on data from
$\lambda_{\mathrm{low}}$ and $\lambda_{\mathrm{high}}$, respectively.
They are fixed once by the family construction rather than selected by target
validation loss or by a merge sweep.
At test time, these weights are composed into a single checkpoint.
Because all checkpoints share the same architecture, normalization, and rollout protocol, their weight displacements are directly comparable.
This shared lineage defines the composition path in Sec.~\ref{sec:method}.

\noindent\textbf{Allowed test-time information.}
We evaluate three controlled PDE families: reaction--diffusion dynamics, 2D incompressible Navier--Stokes flow, and radial dam-break transients.
At test time, checkpoint selection may use physical metadata or a short observed rollout prefix, but no gradient updates are allowed on the target regime.
When using a prefix of frames $K$, $\mathcal{I}_{\mathrm{cal}}$ is used only
for calibration and the future-rollout reported L2 is calculated in
$\mathcal{I}_{\mathrm{fut}}$.

\section{Calibration-Conditioned Merge}
\label{sec:method}

Calibration-Conditioned Merge implements the mechanism proposed in the introduction: same-anchor endpoint experts can reveal a local physical coordinate in weight space.
The method starts from the fixed objects in Sec.~\ref{sec:preliminaries}: a family anchor $\theta_0$ and two endpoint experts $\theta_{\mathrm{low}}$ and $\theta_{\mathrm{high}}$.
The endpoints are chosen by the physical family construction before target evaluation.
Concretely, $\theta_{\mathrm{low}}$ is fine-tuned from $\theta_0$ on the low endpoint regime $\lambda_{\mathrm{low}}$, and $\theta_{\mathrm{high}}$ is fine-tuned from $\theta_0$ on the high endpoint regime $\lambda_{\mathrm{high}}$.
The two endpoints use the same architecture, normalization, rollout metric, and fine-tuning protocol, and they are not chosen by target-regime validation or by an $\alpha$ sweep.
This separates endpoint selection from coordinate selection: CCM only selects
$\hat{\alpha}$ on the fixed endpoint-defined line.
Because the endpoint regimes are placed on an external physical axis with normalized coordinates $s=-1$ and $s=+1$, their residual contrast can be treated as a signed low-to-high direction rather than an arbitrary checkpoint difference.
Our local working hypothesis is that, near the family anchor, same-lineage endpoint fine-tuning defines a first-order physical coordinate,
\begin{equation}
    \theta^{\star}(s)
    \approx
    \theta_0+\Delta^{+}+s\Delta^{-},
    \qquad |s|\ \text{near the endpoint interval},
    \label{eq:local-family-coordinate}
\end{equation}
where $\Delta^{+}$ denotes the adaptation shared by endpoint experts, and $\Delta^{-}$ denotes signed low-to-high physical variation.
CCM builds two components from endpoint experts, then selects a coordinate for the target regime.
App.~\ref{sec:app:theory} provides a conditional local secant interpretation of this construction.

\begin{figure}[t]
\centering
\includegraphics[width=1.0\linewidth]{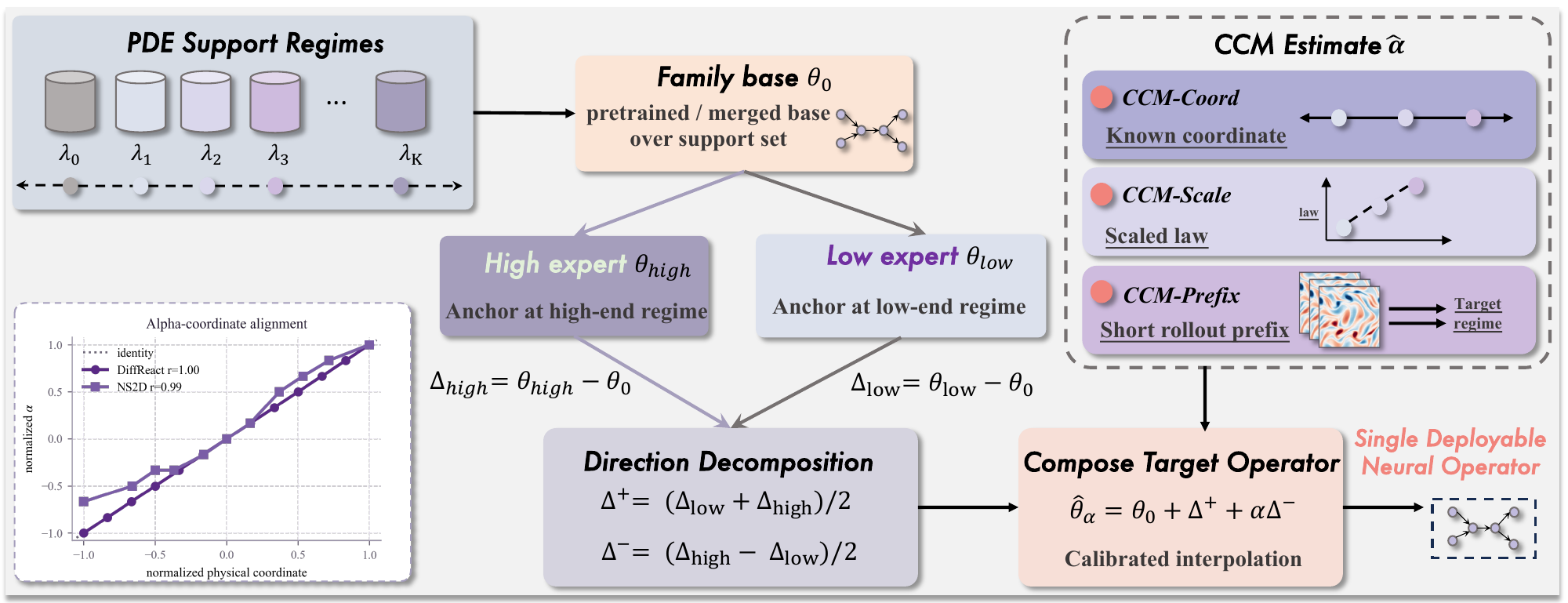}
\caption{\textbf{Calibration-Conditioned Merge framework.}
A shared family anchor $\theta_0$ is trained on support regimes and fine-tuned
into low/high endpoint experts.
Their endpoint--anchor residuals define a shared adaptation $\Delta^{+}$ and a
signed physical direction $\Delta^{-}$.
CCM selects a target coordinate $\hat{\alpha}$ from metadata, scale calibration,
or a short rollout prefix, and instantiates one checkpoint
$\hat{\theta}_{\lambda}=\theta_0+\Delta^{+}+\hat{\alpha}\Delta^{-}$ for
rollout.}
\label{fig:method-framework}
\end{figure}

\subsection{Endpoint residuals define a coordinate family}
\label{sec:method:decomposition}

For endpoint experts fine-tuned from the same anchor, define the endpoint--anchor residuals
\begin{equation}
    \Delta_{\mathrm{low}}=\theta_{\mathrm{low}}-\theta_0,
    \qquad
    \Delta_{\mathrm{high}}=\theta_{\mathrm{high}}-\theta_0 .
\end{equation}
Because both residuals are measured from the same anchor and the endpoints are ordered along the physical axis, we decompose them into
\begin{equation}
    \Delta^{+}
    =
    \frac{\Delta_{\mathrm{low}}+\Delta_{\mathrm{high}}}{2},
    \qquad
    \Delta^{-}
    =
    \frac{\Delta_{\mathrm{high}}-\Delta_{\mathrm{low}}}{2}.
    \label{eq:shared-signed}
\end{equation}
The shared component $\Delta^{+}$ captures what both endpoint experts agree to
change from the anchor.
The signed component $\Delta^{-}$ captures the low-to-high variation oriented along the physical axis.
For any composition coordinate $\alpha$, CCM defines the checkpoint family
\begin{equation}
    \theta(\alpha)
    =
    \theta_0+\Delta^{+}+\alpha\Delta^{-}
    =
    \frac{1-\alpha}{2}\theta_{\mathrm{low}}
    +
    \frac{1+\alpha}{2}\theta_{\mathrm{high}} .
    \label{eq:ccm-family}
\end{equation}
Thus, $\alpha=-1$ and $\alpha=+1$ recover the low and high endpoint experts, respectively.
The value $\alpha=0$ gives the arithmetic average of the two endpoint experts, while $|\alpha|>1$
extrapolates beyond the endpoint interval along the same signed direction.

This parameterization separates two effects that static averaging conflates.
If endpoint fine-tuning mainly learns a family-shared solver correction, then $\Delta^{+}$ should dominate and $\alpha=0$ should be competitive.
If target regimes require endpoint-specific physics, then $\Delta^{-}$ should matter and nonzero $\alpha$ should improve rollout behavior.
The extrapolation gains at $|\alpha|>1$ are especially informative because they
test whether the signed residual carries useful physical variation beyond the trained endpoints.

\subsection{Selecting the composition coordinate}
\label{sec:method:ccm}

The family $\theta(\alpha)$ defines a one-dimensional coordinate line in weight
space.
CCM asks whether a useful coordinate can be selected from information available
at test time:
\begin{equation}
    \hat{\alpha}
    =
    \pi_{\mathrm{CCM}}(\mathcal{I}_{\lambda}),
    \qquad
    \hat{\theta}_{\lambda}
    =
    \theta(\hat{\alpha}),
    \label{eq:ccm-selector}
\end{equation}
where $\mathcal{I}_{\lambda}$ denotes the allowed calibration information for
the target regime.
After selection, rollout uses one checkpoint and one forward path.
The clipping bounds $\alpha_{\min}$ and $\alpha_{\max}$ are fixed before test
evaluation by the deployment range or by the finite coordinate bank used in the
corresponding experiment.

\textbf{CCM-Coord.}
This selector directly tests whether the signed residual follows the measured physical axis.
When the normalized physical coordinate is known and calibrated to the endpoint scale, CCM uses
\begin{equation}
    \hat{\alpha}_{\mathrm{coord}}(\lambda)
    = \mathrm{clip}\!\left(s(\lambda),\alpha_{\min},\alpha_{\max}\right).
    \label{eq:ccm-coord}
\end{equation}
Here the endpoint construction fixes
$s(\lambda_{\mathrm{low}})=-1$ and $s(\lambda_{\mathrm{high}})=+1$; targets
inside this interval use interpolation coordinates, while OOD targets use
extrapolation coordinates before clipping.

\textbf{CCM-Scale.}
The physical coordinate may have the correct ordering but a mismatched
weight-space scale.
CCM therefore uses a globally scaled coordinate,
\begin{equation}
    \hat{\alpha}_{\mathrm{scale}}(\lambda)
    =
    \mathrm{clip}\!\left(\gamma s(\lambda),\alpha_{\min},\alpha_{\max}\right).
    \label{eq:ccm-scale}
\end{equation}
The scalar $\gamma$ is selected once on held-out validation regimes and then
fixed for all inference regimes.

\textbf{CCM-Prefix.}
When metadata are limited or missing, CCM chooses a coordinate from a finite bank
$\mathcal{A}$ based on a short observed rollout prefix:
\begin{equation}
    \hat{\alpha}_{\mathrm{prefix}}(x)
    =
    \arg\min_{\alpha\in\mathcal{A}}
    \widehat{\mathcal{L}}_{x,\mathcal{I}_{\mathrm{cal}}}
    \!\left(\theta(\alpha)\right),
    \qquad
    \mathcal{I}_{\mathrm{cal}}=\{1,\ldots,K\},
    \label{eq:ccm-prefix}
\end{equation}
where $\widehat{\mathcal{L}}_{x,\mathcal{I}_{\mathrm{cal}}}$ denotes the
single-trajectory prefix loss defined by the rollout metric in Sec.~\ref{sec:preliminaries}.
The chosen checkpoint is subsequently assessed on $\mathcal{I}_{\mathrm{fut}}=\{K+1,\ldots,T\}$, excluding the calibration frames from the reported future-rollout error.
Hence, CCM-Prefix uses the short prefix solely to select $\alpha$; it does not perform any weight updates, and the protocol details, including coordinate banks and prefix lengths, are given in App.~\ref{sec:app:protocols}.

\paragraph{Diagnostic oracle coordinate.}
For analysis only, we also report
\begin{equation}
    \alpha^{\star}_{\mathcal{A}}(\lambda)
    =
    \arg\min_{\alpha\in\mathcal{A}}
    \mathcal{L}_{\lambda,\mathcal{I}}\!\left(\theta(\alpha)\right),
    \label{eq:oracle-alpha}
\end{equation}
where $\mathcal{I}$ is the evaluation window for the corresponding task.
It measures whether the endpoint-defined line contains useful target-regime solvers and separates the capacity of the line from the difficulty of selecting its coordinate.
Fig.~\ref{fig:coordinate-loss-strip} compares this diagnostic coordinate with the independently specified physical coordinate $s(\lambda)$.

\section{Experiments}
\label{sec:exp}

In this section, we evaluate whether the endpoint-residual coordinate defined in Sec.~\ref{sec:method} is both physically readable and useful for rollout. 
The experiments follow a mechanism-driven sequence:
\textbf{(i)} whether endpoint residuals separate shared solver adaptation from signed physical variation;
\textbf{(ii)} whether the signed coordinate tracks an independently specified physical axis;
\textbf{(iii)} whether deployable CCM selectors improve beyond-endpoint rollout; and
\textbf{(iv)} whether the behavior persists across controls, architectures, and physics-aware diagnostics.

\textbf{Experimental setting.}
We evaluate our framework on three PDE families with distinct dynamics: reaction--diffusion equations~\cite{takamoto2022pdebench}, two-dimensional Navier--Stokes flow parameterized by viscosity~\cite{li2021fourier}, and radial dam-break equations (RDB)~\cite{takamoto2022pdebench}. 
For each family, we split a scalar physical axis into support, endpoint, and evaluation regimes. 
A family base is trained exclusively on the support regimes, from which low- and high-regime experts are subsequently fine-tuned. 
All model compositions are then executed post hoc and evaluated by the autoregressive rollout metric defined in Sec.~\ref{sec:preliminaries}.
To preclude information leakage in the RDB CCM-Prefix evaluation, the $K=4$ observed prefix is used only for coefficient selection; the future-rollout L2 is computed on the remaining horizon.
More detailed hyperparameters and data construction are recorded in App.~\ref{sec:app:protocols}, App.~\ref{sec:app:data}, and App.~\ref{sec:app:case-studies}.

\textbf{Backbones and baselines.}
Our primary architecture is FNO~\cite{li2021fourier,lu2021learning,kovachki2023neural}.
We also use a DPOT-style backbone~\cite{hao2024dpot} to test whether the residual geometry we observe depends on the operator architecture.
Base denotes the shared family model, while Expert-low and Expert-high are the two endpoint fine-tuned models.
The endpoint average is the checkpoint obtained by averaging the endpoints at $\alpha=0$.
The best fixed $\alpha$ is a single mixing coefficient shared across all targets in a family.
Oracle $\alpha$ is tuned per target and used only diagnostically to assess the expressivity of the endpoint residual line.
CCM-Coord, CCM-Scale, and CCM-Prefix denote plain coordinate conditioning, scale-normalized coordinate conditioning, and selection from a finite bank based on a rollout prefix, respectively.
We also report generic post-hoc merging baselines: Task Arithmetic, TIES, and DARE~\cite{wortsman2022model,matena2022merging,ilharco2023editing,yadav2023ties,yu2024dare}.
Conditional operators and target-specific retraining tackle a complementary adaptation setting~\cite{hao2023gnot,wu2024transolver,herde2024poseidon,mccabe2024mpp}: they evaluate newly trained target-conditioned solvers, not the structure encoded in endpoint updates from a shared base model.

\subsection{Endpoint residuals separate shared adaptation from signed physical variation}
\label{sec:exp:diff-react}
\label{sec:exp:finding-shared-directional}

We first use controlled DiffReact axes as a mechanism test.
Each experiment varies one physical parameter while keeping the PDE family and the model protocol fixed.
The $k$ and $D_v$ axes test whether endpoint averaging captures the solution adaptation shared by both experts.
The harder $D_u$ medium-gap axis tests whether transfer requires signed low-to-high residual contrast.

\begin{center}
\centering
\includegraphics[width=0.49\linewidth]{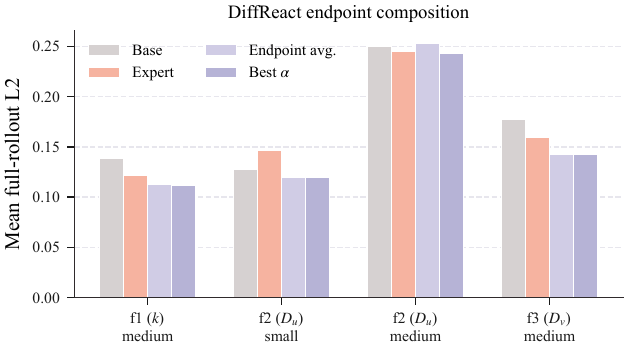}
\hfill
\includegraphics[width=0.49\linewidth]{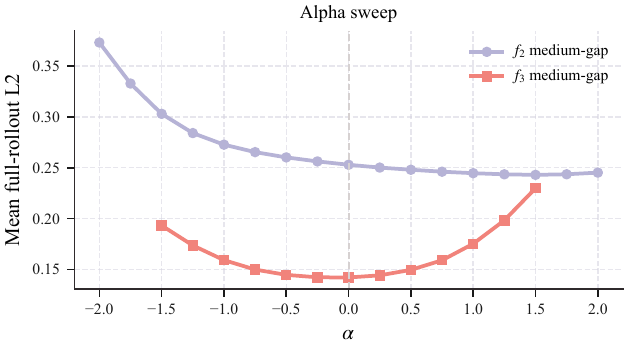}
\captionof{figure}{\textbf{Shared and signed endpoint structure on controlled DiffReact axes.}
Left: endpoint averaging is effective on merge-friendly axes, indicating a
reusable shared component $\Delta^{+}$.
Right: the $f_3$ curve is minimized at $\alpha=0$, whereas the harder
$f_2$ medium-gap setting prefers a nonzero signed family direction
$\Delta^{-}$.}
\label{fig:diff-react-mechanism}
\end{center}

Fig.~\ref{fig:diff-react-mechanism} shows how the same endpoint construction can produce two regimes.
On the $k$ and $D_v$ axes, endpoint averaging improves over the base and is close to, or identical with, the best static point on the $\alpha$ sweep.
For example, on the $D_v$ medium-gap axis, the error decreases from
$\sci{1.775}{-1}$ to $\sci{1.422}{-1}$, and the sweep is minimized at
$\alpha=0$.
This is the shared-adaptation regime: both endpoint experts learn a similar family-wide correction to the anchor.

The $D_u$ medium-gap setting behaves differently.
Endpoint averaging yields a slightly higher error than the baseline ($\sci{2.529}{-1}$ versus $\sci{2.505}{-1}$), whereas introducing a nonzero coordinate with $\alpha = 1.50$ decreases the error to $\sci{2.430}{-1}$.
Although the absolute improvement is smaller than that observed in the merge-friendly axes, the sign of the effect is crucial: averaging suppresses information that is beneficial for the target regime, while traversing the signed direction partially restores this information.
Consequently, endpoint averaging and coordinate-conditioned composition interrogate different components of the endpoint residuals, rather than constituting mutually exclusive merging heuristics.
This distinction provides the first empirical evidence in favor of the shared/signed decomposition.



\Needspace{8\baselineskip}
\subsection{The signed coefficient tracks an independent physical axis}
\label{sec:exp:finding-alpha-coordinate}

We next examine whether the signed residual direction is accessible to physical interpretation. 
For each target regime, we compare two independently obtained quantities: \textbf{(i)} the diagnostic oracle coordinate selected from the coordinate bank via rollout loss, and \textbf{(ii)} the normalized physical coordinate provided by the data generator or defined by the benchmark construction. 
If $\Delta^{-}$ aligns with the physical family axis, these two quantities are expected to covary.

\begin{center}
\centering
\includegraphics[width=\linewidth]{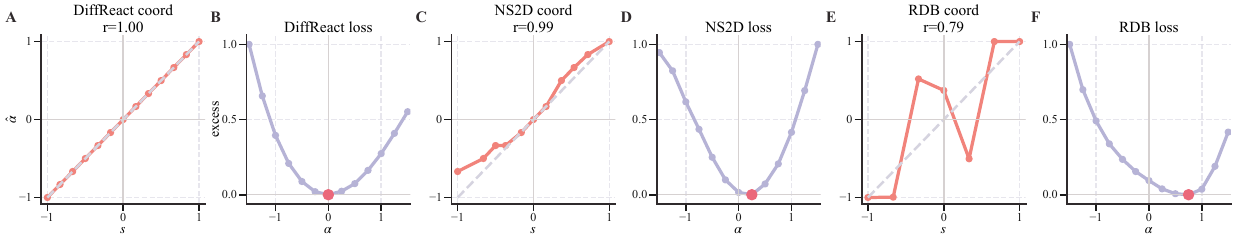}
\captionof{figure}{\textbf{Coordinate law and endpoint-line smoothness across PDE families.}
Panels A, C, and E present a comparison between the normalized physical coordinate and the corresponding diagnostic-oracle-derived coordinate.
Panels B, D, and F display the normalized excess loss evaluated along the same endpoint-defined coordinate axis.
DiffReact and NS2D show strong coordinate alignment, whereas RDB exhibits weaker alignment with the scalar metadata. Nonetheless, RDB produces a smoothly varying, well-parameterized one-dimensional selection path.
}
\label{fig:coordinate-loss-strip}
\end{center}

Panels A, C, and E of Fig.~\ref{fig:coordinate-loss-strip} show that the diagnostic coordinate closely tracks the physical axis in both DiffReact and NS2D.
For DiffReact, the alignment is effectively perfect at the shown precision ($r = 1.00$), and NS2D also shows a very strong correspondence ($r = 0.99$).
These high correlations are the main coordinate-readout evidence: the rollout-selected coefficient is not an arbitrary validation weight but covaries systematically with an independently defined physical coordinate.
By contrast, RDB shows a weaker relationship ($r = 0.79$), indicating that scalar metadata alone do not fully parameterize the free-surface transient and that a dynamic calibration signal is required.

Panels B, D, and F show smooth loss profiles along the endpoint-defined coordinate axis. This smoothness is essential because CCM-Prefix searches a finite coordinate bank; if the loss landscape were irregular or misaligned with the task, prefix selection would be unstable. 
Instead, the profiles reveal a structured valley, with the loss-minimizing coordinate shifting systematically as the target regime changes.
Thus, the weaker RDB coordinate relationship does not undermine the endpoint-defined axis; it instead motivates prefix-based coordinate selection when static metadata are insufficient.

Coordinate choice also affects predictive performance.
Along the dense DiffReact axis, CCM-Coord reduces mean full-rollout L2 from $\sci{1.316}{-1}$ to $\sci{8.24}{-2}$ (a $37.4\%$ decrease).
On the OOD slice, averaging endpoints barely helps, changing error from $\sci{1.88}{-1}$ to $\sci{1.83}{-1}$, whereas CCM-Coord attains $\sci{8.6}{-2}$.
Thus, the coordinate is most useful when the target lies away from the center and the sign of the physical variation matters.
A coordinate law with the wrong sign provides a negative control: it keeps the same checkpoints and magnitudes but flips orientation, increasing error by factors of $2.61$ overall and $3.91$ on OOD tasks.
This rules out a naive ``any interpolation helps'' view: performance depends on choosing the physically correct direction, not merely moving along the checkpoint line.

\subsection{Coordinate selection improves beyond-endpoint rollouts}
\label{sec:exp:finding-ood}
\label{sec:exp:finding-prefix}

We next assess extrapolation along the family axis, where target regimes lie outside the interval spanned by the training endpoints.
This is the setting in which static averaging is expected to attenuate endpoint-specific phenomena, since it effectively removes the signed residual direction.
It also probes whether the endpoint-defined coordinate remains informative beyond the regimes for which the endpoint experts were trained.
We consider three distinct information configurations.
DiffReact dense represents a calibrated-coordinate scenario: the underlying physical coordinate is known, and CCM-Coord directly uses this coordinate.
NS2D viscosity corresponds to an ordered-but-misscaled scenario: the coordinate is aligned with the residual direction, but a single global rescaling improves expert selection, yielding the CCM-Scale variant.
RDB constitutes a weaker-metadata scenario: the available scalar metadata are comparatively unreliable, so CCM-Prefix instead uses a short observed prefix of the trajectory to infer an appropriate coordinate before evaluating future rollouts.

\par\smallskip
\noindent\begin{minipage}{\linewidth}
\centering\small
\captionof{table}{\textbf{OOD family-axis validation.}
DiffReact and NS2D report full-rollout L2 on extrapolative regimes.
RDB reports future-rollout L2 after the $K=4$ prefix used only for
CCM-Prefix selection.
Coordinate- or prefix-conditioned selection gives the main improvement.}
\label{tab:ood-family-axis}
\begin{tabular}{llrrr}
\toprule
\textbf{Benchmark} & \textbf{Method} & \textbf{OOD mean} & \textbf{OOD worst} & \textbf{Gain vs base} \\
\midrule
DiffReact f2 dense & Base & \sci{1.88}{-1} & \sci{2.28}{-1} & -- \\
 & Endpoint average & \sci{1.83}{-1} & \sci{2.24}{-1} & 2.4\% \\
 & Best fixed & \sci{1.83}{-1} & \sci{2.24}{-1} & 2.4\% \\
\rowcolor{pdeRowBlue} & CCM-Coord & \textbf{\sci{8.6}{-2}} & \textbf{\sci{1.00}{-1}} & \textbf{54.2\%} \\
 & Oracle per-task & \sci{8.6}{-2} & \sci{1.00}{-1} & 54.2\% \\
\midrule
NS2D viscosity & Base & \sci{5.91}{-1} & \sci{6.78}{-1} & -- \\
 & Endpoint average & \sci{5.86}{-1} & \sci{7.17}{-1} & 0.9\% \\
 & Best fixed & \sci{5.90}{-1} & \sci{6.70}{-1} & 0.2\% \\
\rowcolor{pdeRowBlue} & CCM-Scale & \textbf{\sci{3.38}{-1}} & \textbf{\sci{3.47}{-1}} & \textbf{42.8\%} \\
 & Oracle alpha & \sci{3.27}{-1} & \sci{3.29}{-1} & 44.7\% \\
\midrule
RDB high-center & Base & \sci{4.71}{-1} & \sci{5.34}{-1} & -- \\
 & Endpoint average & \sci{4.50}{-1} & \sci{5.09}{-1} & 4.5\% \\
 & Best fixed & \sci{4.42}{-1} & \sci{5.23}{-1} & 6.1\% \\
\rowcolor{pdeRowBlue} & CCM-Prefix & \textbf{\sci{4.06}{-1}} & \textbf{\sci{4.83}{-1}} & \textbf{13.8\%} \\
\bottomrule
\end{tabular}

\end{minipage}
\par\smallskip

Tab.~\ref{tab:ood-family-axis} summarizes the OOD comparison.
On DiffReact, endpoint averaging and the best fixed $\alpha$ yield only a $2.4\%$ relative OOD improvement, whereas CCM-Coord reduces the OOD mean from $0.188$ to $0.086$.
The close agreement between CCM-Coord and the oracle coordinate shows that, in this dense-axis regime, the physical metadata suffice to localize the relevant point along the endpoint-defined interpolation line.

On NS2D, the qualitative OOD behavior is similar but harder: endpoint averaging barely changes the OOD mean ($0.591 \to 0.586$), and the best fixed $\alpha$ is essentially unchanged ($0.590$).
In contrast, CCM-Scale achieves an OOD mean of $0.338$, a $42.8\%$ OOD error reduction relative to the baseline.
This indicates that the signed direction is informative, but its coordinate scale must be calibrated for viscosity-parameterized rollouts.
For RDB, endpoint averaging and the best fixed $\alpha$ give OOD mean future L2 errors of $0.450$ and $0.442$, respectively, compared with $0.406$ for CCM-Prefix.
Although smaller than in DiffReact or NS2D, this gain remains meaningful because selection uses only a short prefix and the metric omits the prefix frames, indicating that early rollout dynamics encode both the correct sign and magnitude of the endpoint direction.

Across all three model families, static averaging gives only minor OOD gains, while coordinate- or prefix-conditioned selection yields most of the improvement. 
This aligns with the proposed mechanism: shared adaptation aids inputs near the training distribution center, but endpoint-specific signed parameter variation is needed when targets fall outside the range spanned by the training set.

\subsection{Controls, robustness, and auxiliary diagnostics}
\label{sec:exp:finding-scale-backbone}

We finally test whether the observed coordinate structure persists beyond the main rollout metric and the default FNO backbone.
Fig.~\ref{fig:cross-domain-evidence-atlas} summarizes the key robustness results.
The wrong-sign control consistently increases error, confirming that improvements depend on the endpoint's physical orientation, not arbitrary interpolation.

\begin{center}
\centering
\includegraphics[width=\linewidth]{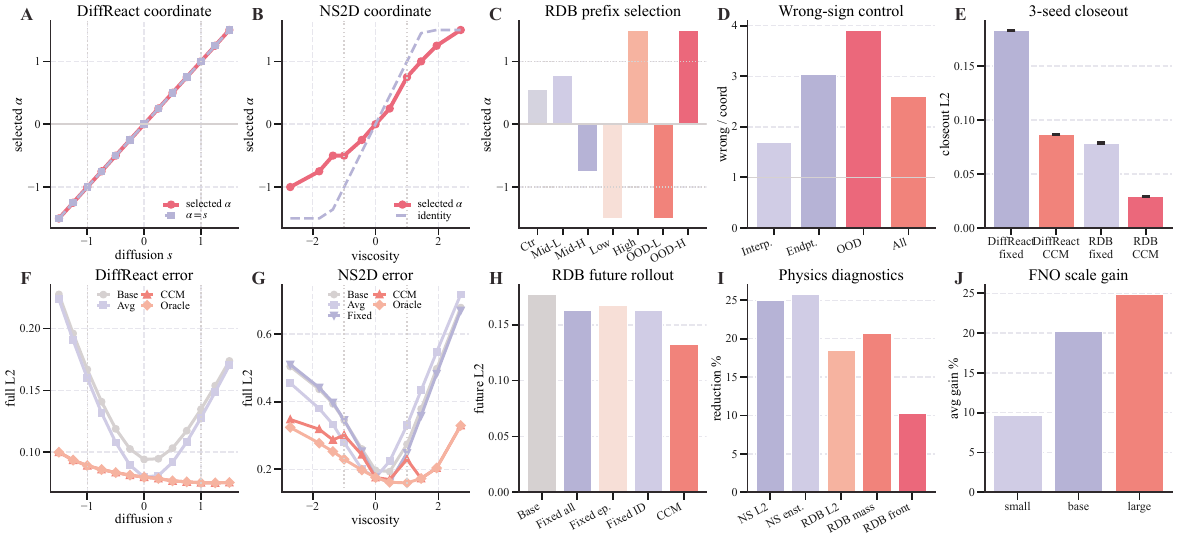}
\captionof{figure}{\textbf{Cross-domain evidence atlas.}
A--B show coordinate alignment; C, RDB prefix selection; D, wrong-sign coordinate control; E, matched-seed independent reruns; F--H, rollout error comparisons; I, auxiliary diagnostic reductions; J, the FNO scale trend.
}
\label{fig:cross-domain-evidence-atlas}
\end{center}

Matched-seed reruns retain the qualitative advantages of both main strategies: metadata-based coordinate selection on DiffReact and prefix-based selection on RDB.
FNO scaling experiments show endpoint averaging consistently outperforming the anchor for small, base, and large models, with DiffReact gains increasing from $9.6\%$ to $24.8\%$.
A compact DPOT-style backbone yields the same ordering: static averaging outperforms the base model, and coordinate-conditioned composition achieves the lowest error.

These tests suggest that the residual geometry is not specific to one FNO scale, operator architecture, or random seed.
We also compute auxiliary physics diagnostics in App.~\ref{sec:app:physics-compute}. 
These diagnostics mirror the rollout metrics: coordinate- or prefix-conditioned selection improves physically meaningful summaries in NS2D and RDB, without altering the single-checkpoint deployment protocol. 
We place full definitions and task-specific metrics in the appendix to avoid overloading the main text with benchmark-specific diagnostics.

Overall, the evidence supports local physical-coordinate geometry within same-anchor, same-family settings. 
Endpoint composition separates shared adaptation from signed physical variation, the signed coordinate follows an independently specified physical axis when metadata are reliable, and coordinate- or prefix-conditioned selection improves extrapolative rollout across three PDE families.

\section{Conclusion}
\label{sec:conclusion}

This work shows that same-anchor neural PDE experts form a continuous family, not isolated checkpoints. 
Decomposing endpoint--anchor residuals separates shared solver adaptation from signed physical variation along the PDE axis, which explains why static averaging suffices when shared adaptation dominates, but fails when extrapolation demands endpoint-specific physics. 
Exploiting this structure, CCM selects a composition coordinate from physical metadata, a calibrated scale, or a short rollout prefix, yielding a single checkpoint without test-time fine-tuning, routing, or ensembling. 
Across diverse PDE families, CCM improves most on family-axis extrapolation, where naive averaging suppresses signed physical information. 
Controlled expert composition thus serves as both a training-free adaptation mechanism and a diagnostic: it reveals how neural solvers encode continuous physical variation as interpretable weight-space directions.

\vspace{-1mm}
\paragraph{Limitations.}
\label{sec:limitations}
CCM presupposes a common local geometry, compatible architectures, and stable fine-tuning dynamics, but does not provide a universal OOD guarantee.  
CCM-Prefix additionally depends on a brief observation horizon, which precludes true zero-shot deployment.  
The coordinate assumption breaks down when endpoint gaps grow large, when we move far from the anchor's local chart, or when the checkpoint manifold exhibits high curvature.  
Our analysis currently focuses on single scalar directions; extending to multi-parameter PDE families will likely require cross-axis experts or higher-order charts to represent coupled interactions.

\clearpage
\phantomsection
\label{page:after-main}
\bibliographystyle{unsrtnat}
\bibliography{references}

@article{li2021fourier,
  title={Fourier neural operator for parametric partial differential equations},
  author={Li, Zongyi and Kovachki, Nikola and Azizzadenesheli, Kamyar and Liu, Burigede and Bhattacharya, Kaushik and Stuart, Andrew and Anandkumar, Anima},
  journal={arXiv preprint arXiv:2010.08895},
  year={2020}
}

@article{lu2021learning,
  title={Learning nonlinear operators via DeepONet based on the universal approximation theorem of operators},
  author={Lu, Lu and Jin, Pengzhan and Pang, Guofei and Zhang, Zhongqiang and Karniadakis, George Em},
  journal={Nature machine intelligence},
  volume={3},
  number={3},
  pages={218--229},
  year={2021},
  publisher={Nature Publishing Group UK London}
}

@article{kovachki2023neural,
  title={Neural operator: Learning maps between function spaces with applications to pdes},
  author={Kovachki, Nikola and Li, Zongyi and Liu, Burigede and Azizzadenesheli, Kamyar and Bhattacharya, Kaushik and Stuart, Andrew and Anandkumar, Anima},
  journal={Journal of Machine Learning Research},
  volume={24},
  number={89},
  pages={1--97},
  year={2023}
}

@article{liu2025efficient,
  title={An Efficient Graph-Transformer Operator for Learning Physical Dynamics with Manifolds Embedding},
  author={Liu, Pengwei and Ren, Xingyu and Wang, Pengkai and Yuan, Hangjie and Hao, Zhongkai and Chen, Guanyu and Xu, Chao and Ni, Dong and Cai, Shengze},
  journal={arXiv preprint arXiv:2512.10227},
  year={2025}
}

@inproceedings{liu2025aerogto,
  title={Aerogto: An efficient graph-transformer operator for learning large-scale aerodynamics of 3d vehicle geometries},
  author={Liu, Pengwei and Wang, Pengkai and Ren, Xingyu and Yuan, Hangjie and Hao, Zhongkai and Xu, Chao and Cai, Shengze and Ni, Dong},
  booktitle={Proceedings of the AAAI Conference on Artificial Intelligence},
  volume={39},
  number={18},
  pages={18924--18932},
  year={2025}
}

@article{takamoto2022pdebench,
  title={Pdebench: An extensive benchmark for scientific machine learning},
  author={Takamoto, Makoto and Praditia, Timothy and Leiteritz, Raphael and MacKinlay, Daniel and Alesiani, Francesco and Pfl{\"u}ger, Dirk and Niepert, Mathias},
  journal={Advances in neural information processing systems},
  volume={35},
  pages={1596--1611},
  year={2022}
}

@article{wu2024transolver,
  title={Transolver: A fast transformer solver for pdes on general geometries},
  author={Wu, Haixu and Luo, Huakun and Wang, Haowen and Wang, Jianmin and Long, Mingsheng},
  journal={arXiv preprint arXiv:2402.02366},
  year={2024}
}

@article{cao2021choose,
  title={Choose a transformer: Fourier or galerkin},
  author={Cao, Shuhao},
  journal={Advances in neural information processing systems},
  volume={34},
  pages={24924--24940},
  year={2021}
}

@article{gupta2021multiwavelet,
  title={Multiwavelet-based operator learning for differential equations},
  author={Gupta, Gaurav and Xiao, Xiongye and Bogdan, Paul},
  journal={Advances in neural information processing systems},
  volume={34},
  pages={24048--24062},
  year={2021}
}

@article{brandstetter2022message,
  title={Message passing neural PDE solvers},
  author={Brandstetter, Johannes and Worrall, Daniel and Welling, Max},
  journal={arXiv preprint arXiv:2202.03376},
  year={2022}
}

@article{pfaff2021meshgraphnets,
  title={Learning mesh-based simulation with graph networks},
  author={Pfaff, Tobias and Fortunato, Meire and Sanchez-Gonzalez, Alvaro and Battaglia, Peter W},
  journal={arXiv preprint arXiv:2010.03409},
  year={2020}
}

@article{boussif2022magnet,
  title={Magnet: Mesh agnostic neural pde solver},
  author={Boussif, Oussama and Bengio, Yoshua and Benabbou, Loubna and Assouline, Dan},
  journal={Advances in Neural Information Processing Systems},
  volume={35},
  pages={31972--31985},
  year={2022}
}

@article{tran2023factorized,
  title={Factorized fourier neural operators},
  author={Tran, Alasdair and Mathews, Alexander and Xie, Lexing and Ong, Cheng Soon},
  journal={arXiv preprint arXiv:2111.13802},
  year={2021}
}

@article{helwig2023group,
  title={Group equivariant fourier neural operators for partial differential equations},
  author={Helwig, Jacob and Zhang, Xuan and Fu, Cong and Kurtin, Jerry and Wojtowytsch, Stephan and Ji, Shuiwang},
  journal={arXiv preprint arXiv:2306.05697},
  year={2023}
}

@article{ren2026foundation,
  title={Foundation neural operators: A survey on pretraining methods, the data ecosystem, and efficient adaptation},
  author={Ren, Xingyu and Wang, Pengkai and Liu, Pengwei and Yue, Xihang and Dong, Huanshuo and Huang, Zhenxin and Hao, Zhongkai and Hu, Ziqian and Huang, Zhen and Wang, Yian and others},
  year={2026},
  publisher={TechRxiv}
}

@inproceedings{renuncertainty,
  title={Uncertainty-Informed Meta Pseudo Labeling for Surrogate Modeling with Limited Labeled Data},
  author={Ren, Xingyu and Liu, Pengwei and Wang, Pengkai and Chen, Guanyu and Wu, Qinxin and Ni, Dong},
  booktitle={The Thirty-ninth Annual Conference on Neural Information Processing Systems},
  year={2025}
}

@inproceedings{wu2025mpg,
  title={MPG: An Efficient Multi-scale Point-Based GNN for Non-uniform Meshes},
  author={Wu, Qinxin and Liu, Pengwei and Ren, Xingyu and Ni, Dong},
  booktitle={Joint European Conference on Machine Learning and Knowledge Discovery in Databases},
  pages={3--18},
  year={2025},
  organization={Springer}
}

@inproceedings{hao2023gnot,
  title={Gnot: A general neural operator transformer for operator learning},
  author={Hao, Zhongkai and Wang, Zhengyi and Su, Hang and Ying, Chengyang and Dong, Yinpeng and Liu, Songming and Cheng, Ze and Song, Jian and Zhu, Jun},
  booktitle={International conference on machine learning},
  pages={12556--12569},
  year={2023},
  organization={PMLR}
}

@article{he2024mgno,
  title={MgNO: Efficient parameterization of linear operators via multigrid},
  author={He, Juncai and Liu, Xinliang and Xu, Jinchao},
  journal={arXiv preprint arXiv:2310.19809},
  year={2023}
}

@article{wang2024latent,
  title={Latent neural operator for solving forward and inverse pde problems},
  author={Wang, Tian and Wang, Chuang},
  journal={Advances in Neural Information Processing Systems},
  volume={37},
  pages={33085--33107},
  year={2024}
}

@inproceedings{hao2024dpot,
  title={DPOT: auto-regressive denoising operator transformer for large-scale PDE pre-training},
  author={Hao, Zhongkai and Su, Chang and Liu, Songming and Berner, Julius and Ying, Chengyang and Su, Hang and Anandkumar, Anima and Song, Jian and Zhu, Jun},
  booktitle={Proceedings of the 41st International Conference on Machine Learning},
  pages={17616--17635},
  year={2024}
}

@article{li2023geofno,
  title={Fourier neural operator with learned deformations for pdes on general geometries},
  author={Li, Zongyi and Huang, Daniel Zhengyu and Liu, Burigede and Anandkumar, Anima},
  journal={Journal of Machine Learning Research},
  volume={24},
  number={388},
  pages={1--26},
  year={2023}
}

@article{li2023gino,
  title={Geometry-informed neural operator for large-scale 3d pdes},
  author={Li, Zongyi and Kovachki, Nikola and Choy, Chris and Li, Boyi and Kossaifi, Jean and Otta, Shourya and Nabian, Mohammad Amin and Stadler, Maximilian and Hundt, Christian and Azizzadenesheli, Kamyar and others},
  journal={Advances in Neural Information Processing Systems},
  volume={36},
  pages={35836--35854},
  year={2023}
}

@article{zheng2024alias,
  title={Alias-free mamba neural operator},
  author={Zheng, Jianwei and Li, Wei and Xu, Ni and Zhu, Junwei and Lin, Xiaoxu and Zhang, Xiaoqin},
  journal={Advances in Neural Information Processing Systems},
  volume={37},
  pages={52962--52995},
  year={2024}
}

@article{li2020neural,
  title={Neural operator: Graph kernel network for partial differential equations},
  author={Li, Zongyi and Kovachki, Nikola and Azizzadenesheli, Kamyar and Liu, Burigede and Bhattacharya, Kaushik and Stuart, Andrew and Anandkumar, Anima},
  journal={arXiv preprint arXiv:2003.03485},
  year={2020}
}

@article{cao2024laplace,
  title={Laplace neural operator for solving differential equations},
  author={Cao, Qianying and Goswami, Somdatta and Karniadakis, George Em},
  journal={Nature Machine Intelligence},
  volume={6},
  number={6},
  pages={631--640},
  year={2024},
  publisher={Nature Publishing Group UK London}
}

@article{azizzadenesheli2024neural,
  title={Neural operators for accelerating scientific simulations and design},
  author={Azizzadenesheli, Kamyar and Kovachki, Nikola and Li, Zongyi and Liu-Schiaffini, Miguel and Kossaifi, Jean and Anandkumar, Anima},
  journal={Nature Reviews Physics},
  volume={6},
  number={5},
  pages={320--328},
  year={2024},
  publisher={Nature Publishing Group UK London}
}

@article{koehler2024apebench,
  title={Apebench: A benchmark for autoregressive neural emulators of pdes},
  author={Koehler, Felix and Niedermayr, Simon and Westermann, R{\"u}diger and Thuerey, Nils},
  journal={Advances in Neural Information Processing Systems},
  volume={37},
  pages={120252--120310},
  year={2024}
}

@inproceedings{kurth2023fourcastnet,
  title={Fourcastnet: Accelerating global high-resolution weather forecasting using adaptive fourier neural operators},
  author={Kurth, Thorsten and Subramanian, Shashank and Harrington, Peter and Pathak, Jaideep and Mardani, Morteza and Hall, David and Miele, Andrea and Kashinath, Karthik and Anandkumar, Anima},
  booktitle={Proceedings of the platform for advanced scientific computing conference},
  pages={1--11},
  year={2023}
}

@article{goswami2022deeptransfer,
  title={Deep transfer operator learning for partial differential equations under conditional shift},
  author={Goswami, Somdatta and Kontolati, Katiana and Shields, Michael D and Karniadakis, George Em},
  journal={Nature Machine Intelligence},
  volume={4},
  number={12},
  pages={1155--1164},
  year={2022},
  publisher={Nature Publishing Group UK London}
}

@article{yin2024scalable,
  title={A scalable framework for learning the geometry-dependent solution operators of partial differential equations},
  author={Yin, Minglang and Charon, Nicolas and Brody, Ryan and Lu, Lu and Trayanova, Natalia and Maggioni, Mauro},
  journal={Nature computational science},
  volume={4},
  number={12},
  pages={928--940},
  year={2024},
  publisher={Nature Publishing Group US New York}
}

@article{zhao2025dno,
  title={Diffeomorphism neural operator for various domains and parameters of partial differential equations},
  author={Zhao, Zhiwei and Liu, Changqing and Li, Yingguang and Chen, Zhibin and Liu, Xu},
  journal={Communications Physics},
  volume={8},
  number={1},
  pages={15},
  year={2025},
  publisher={Nature Publishing Group UK London}
}

@article{herde2024poseidon,
  title={Poseidon: Efficient foundation models for pdes},
  author={Herde, Maximilian and Raoni{\'c}, Bogdan and Rohner, Tobias and K{\"a}ppeli, Roger and Molinaro, Roberto and De Bezenac, Emmanuel and Mishra, Siddhartha},
  journal={Advances in Neural Information Processing Systems},
  volume={37},
  pages={72525--72624},
  year={2024}
}

@article{rahman2024pretraining,
  title={Pretraining codomain attention neural operators for solving multiphysics pdes},
  author={Rahman, Ashiqur and George, Robert J and Elleithy, Mogab and Leibovici, Daniel and Li, Zongyi and Bonev, Boris and White, Colin and Berner, Julius and Yeh, Raymond A and Kossaifi, Jean and others},
  journal={Advances in Neural Information Processing Systems},
  volume={37},
  pages={104035--104064},
  year={2024}
}

@article{mccabe2024mpp,
  title={Multiple physics pretraining for spatiotemporal surrogate models},
  author={McCabe, Michael and R{\'e}galdo-Saint Blancard, Bruno and Parker, Liam and Ohana, Ruben and Cranmer, Miles and Bietti, Alberto and Eickenberg, Michael and Golkar, Siavash and Krawezik, Geraud and Lanusse, Francois and others},
  journal={Advances in Neural Information Processing Systems},
  volume={37},
  pages={119301--119335},
  year={2024}
}

@article{ohana2024thewell,
  title={The well: a large-scale collection of diverse physics simulations for machine learning},
  author={Ohana, Ruben and McCabe, Michael and Meyer, Lucas and Morel, Rudy and Agocs, Fruzsina J and Beneitez, Miguel and Berger, Marsha and Burkhart, Blakesley and Dalziel, Stuart B and Fielding, Drummond B and others},
  journal={Advances in Neural Information Processing Systems},
  volume={37},
  pages={44989--45037},
  year={2024}
}

@article{mccabe2025walrus,
  title={Walrus: A cross-domain foundation model for continuum dynamics},
  author={McCabe, Michael and Mukhopadhyay, Payel and Marwah, Tanya and Blancard, Bruno Regaldo-Saint and Rozet, Francois and Diaconu, Cristiana and Meyer, Lucas and Wong, Kaze WK and Sotoudeh, Hadi and Bietti, Alberto and others},
  journal={arXiv preprint arXiv:2511.15684},
  year={2025}
}

@article{bi2023pangu,
  title={Accurate medium-range global weather forecasting with 3D neural networks},
  author={Bi, Kaifeng and Xie, Lingxi and Zhang, Hengheng and Chen, Xin and Gu, Xiaotao and Tian, Qi},
  journal={Nature},
  volume={619},
  number={7970},
  pages={533--538},
  year={2023},
  publisher={Nature Publishing Group UK London}
}

@article{lam2023graphcast,
  title={Learning skillful medium-range global weather forecasting},
  author={Lam, Remi and Sanchez-Gonzalez, Alvaro and Willson, Matthew and Wirnsberger, Peter and Fortunato, Meire and Alet, Ferran and Ravuri, Suman and Ewalds, Timo and Eaton-Rosen, Zach and Hu, Weihua and others},
  journal={Science},
  volume={382},
  number={6677},
  pages={1416--1421},
  year={2023},
  publisher={American Association for the Advancement of Science}
}

@article{kochkov2024neuralgcm,
  title={Neural general circulation models for weather and climate},
  author={Kochkov, Dmitrii and Yuval, Janni and Langmore, Ian and Norgaard, Peter and Smith, Jamie and Mooers, Griffin and Kl{\"o}wer, Milan and Lottes, James and Rasp, Stephan and D{\"u}ben, Peter and others},
  journal={Nature},
  volume={632},
  number={8027},
  pages={1060--1066},
  year={2024},
  publisher={Nature Publishing Group UK London}
}

@article{bodnar2025aurora,
  title={A foundation model for the Earth system},
  author={Bodnar, Cristian and Bruinsma, Wessel P and Lucic, Ana and Stanley, Megan and Allen, Anna and Brandstetter, Johannes and Garvan, Patrick and Riechert, Maik and Weyn, Jonathan A and Dong, Haiyu and others},
  journal={Nature},
  volume={641},
  number={8065},
  pages={1180--1187},
  year={2025},
  publisher={Nature Publishing Group UK London}
}

@article{allen2025aardvark,
  title={End-to-end data-driven weather prediction},
  author={Allen, Anna and Markou, Stratis and Tebbutt, Will and Requeima, James and Bruinsma, Wessel P and Andersson, Tom R and Herzog, Michael and Lane, Nicholas D and Chantry, Matthew and Hosking, J Scott and others},
  journal={Nature},
  volume={641},
  number={8065},
  pages={1172--1179},
  year={2025},
  publisher={Nature Publishing Group UK London}
}

@inproceedings{wortsman2022model,
  title={Model soups: averaging weights of multiple fine-tuned models improves accuracy without increasing inference time},
  author={Wortsman, Mitchell and Ilharco, Gabriel and Gadre, Samir Ya and Roelofs, Rebecca and Gontijo-Lopes, Raphael and Morcos, Ari S and Namkoong, Hongseok and Farhadi, Ali and Carmon, Yair and Kornblith, Simon and others},
  booktitle={International conference on machine learning},
  pages={23965--23998},
  year={2022},
  organization={PMLR}
}

@article{matena2022merging,
  title={Merging models with fisher-weighted averaging},
  author={Matena, Michael S and Raffel, Colin A},
  journal={Advances in Neural Information Processing Systems},
  volume={35},
  pages={17703--17716},
  year={2022}
}

@article{ilharco2023editing,
  title={Editing models with task arithmetic},
  author={Ilharco, Gabriel and Ribeiro, Marco Tulio and Wortsman, Mitchell and Gururangan, Suchin and Schmidt, Ludwig and Hajishirzi, Hannaneh and Farhadi, Ali},
  journal={arXiv preprint arXiv:2212.04089},
  year={2022}
}

@article{yadav2023ties,
  title={Ties-merging: Resolving interference when merging models},
  author={Yadav, Prateek and Tam, Derek and Choshen, Leshem and Raffel, Colin A and Bansal, Mohit},
  journal={Advances in neural information processing systems},
  volume={36},
  pages={7093--7115},
  year={2023}
}

@inproceedings{yu2024dare,
  title={Language models are super mario: Absorbing abilities from homologous models as a free lunch},
  author={Yu, Le and Yu, Bowen and Yu, Haiyang and Huang, Fei and Li, Yongbin},
  booktitle={Forty-first International Conference on Machine Learning},
  year={2024}
}

@article{yang2024adamerging,
  title={Adamerging: Adaptive model merging for multi-task learning},
  author={Yang, Enneng and Wang, Zhenyi and Shen, Li and Liu, Shiwei and Guo, Guibing and Wang, Xingwei and Tao, Dacheng},
  journal={arXiv preprint arXiv:2310.02575},
  year={2023}
}

@article{yan2025calm,
  title={Calm: Consensus-aware localized merging for multi-task learning},
  author={Yan, Kunda and Zhang, Min and Cui, Sen and Qu, Zikun and Jiang, Bo and Liu, Feng and Zhang, Changshui},
  journal={arXiv preprint arXiv:2506.13406},
  year={2025}
}

@article{li2025map,
  title={Map: Low-compute model merging with amortized pareto fronts via quadratic approximation},
  author={Li, Lu and Zhang, Tianyu and Bu, Zhiqi and Wang, Suyuchen and He, Huan and Fu, Jie and Wu, Yonghui and Bian, Jiang and Chen, Yong and Bengio, Yoshua},
  journal={arXiv preprint arXiv:2406.07529},
  year={2024}
}

@article{huang2024emr,
  title={Emr-merging: Tuning-free high-performance model merging},
  author={Huang, Chenyu and Ye, Peng and Chen, Tao and He, Tong and Yue, Xiangyu and Ouyang, Wanli},
  journal={Advances in Neural Information Processing Systems},
  volume={37},
  pages={122741--122769},
  year={2024}
}

@article{ortizjimenez2023task,
  title={Task arithmetic in the tangent space: Improved editing of pre-trained models},
  author={Ortiz-Jimenez, Guillermo and Favero, Alessandro and Frossard, Pascal},
  journal={Advances in Neural Information Processing Systems},
  volume={36},
  pages={66727--66754},
  year={2023}
}

@article{singh2020model,
  title={Model fusion via optimal transport},
  author={Singh, Sidak Pal and Jaggi, Martin},
  journal={Advances in Neural Information Processing Systems},
  volume={33},
  pages={22045--22055},
  year={2020}
}

@article{ainsworth2023git,
  title={Git re-basin: Merging models modulo permutation symmetries},
  author={Ainsworth, Samuel K and Hayase, Jonathan and Srinivasa, Siddhartha},
  journal={arXiv preprint arXiv:2209.04836},
  year={2022}
}

@article{yang2023context,
  title={In-context operator learning with data prompts for differential equation problems},
  author={Yang, Liu and Liu, Siting and Meng, Tingwei and Osher, Stanley J},
  journal={Proceedings of the National Academy of Sciences},
  volume={120},
  number={39},
  pages={e2310142120},
  year={2023},
  publisher={National Academy of Sciences}
}

@article{hu2022lora,
  title={Lora: Low-rank adaptation of large language models.},
  author={Hu, Edward J and Shen, Yelong and Wallis, Phillip and Allen-Zhu, Zeyuan and Li, Yuanzhi and Wang, Shean and Wang, Liang and Chen, Weizhu and others},
  journal={Iclr},
  volume={1},
  number={2},
  pages={3},
  year={2022}
}

@article{wang2025model,
  title={Model merging scaling laws in large language models},
  author={Wang, Yuanyi and Gu, Yanggan and Zhang, Yiming and Zhou, Qi and Yan, Zhaoyi and Xie, Congkai and Wang, Xinyao and Yuan, Jianbo and Yang, Hongxia},
  journal={arXiv preprint arXiv:2509.24244},
  year={2025}
}

@article{gu2025infifpo,
  title={InfiFPO: Implicit model fusion via preference optimization in large language models},
  author={Gu, Yanggan and Wang, Yuanyi and Yan, Zhaoyi and Zhang, Yiming and Zhou, Qi and Wu, Fei and Yang, Hongxia},
  journal={arXiv preprint arXiv:2505.13878},
  year={2025}
}

@article{wang2025infigfusion,
  title={Infigfusion: Graph-on-logits distillation via efficient gromov-wasserstein for model fusion},
  author={Wang, Yuanyi and Yan, Zhaoyi and Zhang, Yiming and Zhou, Qi and Gu, Yanggan and Wu, Fei and Yang, Hongxia},
  journal={arXiv preprint arXiv:2505.13893},
  year={2025}
}

\clearpage
\appendix
\newcommand{\appendixline}[2]{%
  \par\noindent\hyperref[#2]{\textbf{#1}}%
  \nobreak\leaders\hbox to 0.6em{\hss.\hss}\hfill\textbf{\pageref{#2}}\par
}

\newcommand{\appendixsubline}[2]{%
  \par\noindent\hspace{2.2em}\hyperref[#2]{#1}%
  \nobreak\leaders\hbox to 0.6em{\hss.\hss}\hfill\pageref{#2}\par
}

\newcommand{\apptablefont}{%
  \small
  \setlength{\tabcolsep}{4.5pt}
  \renewcommand{\arraystretch}{1.04}
}

\newcommand{\appdensefont}{%
  \footnotesize
  \setlength{\tabcolsep}{3.5pt}
  \renewcommand{\arraystretch}{1.03}
}

\newcommand{\makeappendixcontents}{%
  \clearpage
  \phantomsection
  \section*{Appendix Contents}
  \label{sec:appendix}
  \vspace{0.7em}
  \begingroup
  \normalsize
  \setlength{\parskip}{0.18em}
  \setlength{\baselineskip}{1.08\baselineskip}

  \appendixline{A \quad Theory for Controlled PDE Merge}{sec:app:theory}
  \appendixsubline{A.1 \quad Two-point continuation bound}{sec:app:two-point-bound}
  \appendixsubline{A.2 \quad Why same-base endpoint updates carry directional information}{sec:app:same-base-direction}
  \appendixsubline{A.3 \quad PDE regularity behind the curvature assumption}{sec:app:pde-regularity}
  \vspace{0.35em}

  \appendixline{B \quad Detailed Related Work}{sec:app:related}
  \vspace{0.35em}

  \appendixline{C \quad Experimental Protocols and Hyperparameters}{sec:app:protocols}
  \appendixsubline{C.1 \quad Training and merge protocol}{sec:app:training}
  \appendixsubline{C.2 \quad CCM variants and future-only calibration}{sec:app:algorithm}
  \vspace{0.35em}

  \appendixline{D \quad Datasets, PDE Equations, and Family Construction}{sec:app:data}
  \appendixsubline{D.1 \quad Controlled family axes}{sec:app:data-axes}
  \appendixsubline{D.2 \quad DiffReact, NS2D, and RDB details}{sec:app:data-diffreact}
  \vspace{0.35em}

  \appendixline{E \quad Additional Evaluation Results}{sec:app:ablations}
  \appendixsubline{E.1 \quad DiffReact dense coordinate control}{sec:app:diffreact-dense}
  \appendixsubline{E.2 \quad Scale, backbone, and OOD validation}{sec:app:fno-scale}
  \appendixsubline{E.3 \quad NS2D and RDB calibration details}{sec:app:ns2d-policy}
  \appendixsubline{E.4 \quad Additional comparisons, uncertainty, seeds, and physics metrics}{sec:app:method-ablation}
  \vspace{0.35em}

  \appendixline{F \quad Qualitative Case Studies}{sec:app:case-studies}
  \appendixsubline{F.1 \quad Late-time medium-gap visualizations}{sec:app:case-studies}
  \endgroup
  \clearpage
}

\providecommand{\eps}{\varepsilon}

\newsavebox{\pdemergekeybox}
\newenvironment{keybox}[1]{%
  \par\medskip\noindent
  \begin{lrbox}{\pdemergekeybox}%
  \begin{minipage}{0.92\linewidth}%
  \textbf{#1.}\par\smallskip
}{%
  \end{minipage}%
  \end{lrbox}%
  \noindent\setlength{\fboxsep}{6pt}\fbox{\usebox{\pdemergekeybox}}%
  \par\medskip
}

\makeappendixcontents

\FloatBarrier
\section{Theory for Controlled PDE Merge}
\label{sec:app:theory}
In this section, we derive a local, conditionally valid estimate for the Controlled PDE Merge problem.
It should be interpreted as an a posteriori continuation bound applicable to experts sharing the same methodological lineage, rather than as a universal guarantee valid for arbitrary weight interpolation schemes.
The only quantities that enter into the estimate are the terminal (endpoint) approximation error and the curvature of the trajectory in function space.

\paragraph{Functional setting.}
Let $(\mathcal X,\rho)$ denote the input probability space and let $\mathcal Y_T$ be a Hilbert space of finite-horizon trajectories.  
We define
\begin{equation}
  \mathcal H_\rho := L^2(\mathcal X,\rho;\mathcal Y_T),
  \qquad
  \norm{A}_\rho^2 := \mathbb E_{a\sim\rho}
  \norm{A(a)}_{\mathcal Y_T}^2 .
\end{equation}
For a normalized physical coordinate $s$, we denote by $\mathcal S_s\in\mathcal H_\rho$ the exact (i.e., ground-truth) solution operator associated with the underlying PDE.  
The lower and upper endpoints of the normalized coordinate are fixed at $s=-1$ and $s=1$, respectively.

Under these settings, let $\theta_0$ denote the base checkpoint, and let $\theta_-$ and $\theta_+$ denote two endpoint experts that are both fine-tuned from this common base.  
We define the parameter offsets
\[
  \Delta_- := \theta_- - \theta_0,
  \qquad
  \Delta_+ := \theta_+ - \theta_0,
\]
and decompose them into their shared and directional components,
\[
  \Delta^{\rm sh} := \frac{\Delta_-+\Delta_+}{2},
  \qquad
  \Delta^{\rm dir} := \frac{\Delta_+-\Delta_-}{2}.
\]
For $\alpha$ in a fixed local interval $J\supset\{-1,1\}$, we then define the corresponding family of models
\[
  \mathcal F_\alpha := \mathcal F_{\theta(\alpha)} .
\]

\begin{keybox}{Endpoint-defined coordinate line}
\[
  \theta(\alpha)
  := \theta_0+\Delta^{\rm sh}+\alpha\Delta^{\rm dir}
  = \frac{1-\alpha}{2}\theta_-+\frac{1+\alpha}{2}\theta_+ .
\]
\end{keybox}

In this parameterization, $\alpha=0$ yields the arithmetic average of the two endpoints, $\alpha=\pm1$ recovers the original endpoint experts, and $|\alpha|>1$ corresponds to a local extrapolation beyond the endpoints. All constants appearing below are understood to be local with respect to the interval $J$.

\subsection{Two-point continuation bound}
\label{sec:app:two-point-bound}

For $\alpha \in J$, denote by
\[
  I_\alpha := {\rm conv}\{-1,1,\alpha\}
\]
the smallest closed interval containing the points $-1$, $1$, and $\alpha$.

\begin{lemma}[Hilbert-valued two-point interpolation]
\label{lem:hilbert-two-point}
Let $\mathcal H$ be a Hilbert space and let $\varphi:I_\alpha\to\mathcal H$ be a twice continuously differentiable mapping, i.e., of class $C^2$. Define
\[
  \ell_\varphi(\alpha)
  := \frac{1-\alpha}{2}\,\varphi(-1)
  +  \frac{1+\alpha}{2}\,\varphi(1).
\]
Then the following interpolation error estimate holds:
\[
  \bigl\|\varphi(\alpha)-\ell_\varphi(\alpha)\bigr\|_{\mathcal H}
  \le
  \frac{|\alpha^2-1|}{2}
  \sup_{t\in I_\alpha}\bigl\|\varphi''(t)\bigr\|_{\mathcal H} .
\]
\end{lemma}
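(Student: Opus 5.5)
The estimate is the classical linear-interpolation (Peano kernel) error bound, lifted to Hilbert-space values. The natural route is to reduce to the scalar case by duality and then use a Green's-function representation. This avoids any vector-valued mean value theorem, which fails in general.

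\textbf{Reduction to scalars.} Fix $\alpha\in J$ and set $e:=\varphi(\alpha)-\ell_\varphi(\alpha)$. If $e=0$ there is nothing to prove. Otherwise take the unit vector $h:=e/\|e\|_{\mathcal H}$ and define the real function $g(t):=\langle \varphi(t),h\rangle_{\mathcal H}$ on $I_\alpha$. Since $\varphi\in C^2$ and the inner product is continuous and linear, $g\in C^2(I_\alpha)$ with $g''(t)=\langle\varphi''(t),h\rangle$. Hence $|g''(t)|\le\|\varphi''(t)\|_{\mathcal H}$ by Cauchy--Schwarz. Because $\ell_\varphi$ is affine in the endpoint values, we have $\|e\|=\langle e,h\rangle = g(\alpha)-\ell_g(\alpha)$, with $\ell_g$ the scalar secant through $(-1,g(-1))$ and $(1,g(1))$. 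It therefore suffices to show $|g(\alpha)-\ell_g(\alpha)|\le \tfrac{|\alpha^2-1|}{2}\sup_{I_\alpha}|g''|$.

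\textbf{Scalar estimate.} Let $r:=g-\ell_g$, so that $r(\pm1)=0$ and $r''=g''$. Consider the auxiliary function $w(t):=r(t)-K(t^2-1)$, with the constant $K$ chosen so that $w(\alpha)=0$; this is possible whenever $\alpha\neq\pm1$, and the cases $\alpha=\pm1$ are trivial. Then $w$ vanishes at the three points $-1,1,\alpha$. If these are distinct, Rolle's theorem applied twice gives some $\xi\in I_\alpha$ with $w''(\xi)=0$, i.e. $g''(\xi)=2K$. Consequently
\[
  r(\alpha)=\tfrac{g''(\xi)}{2}(\alpha^2-1),
\]
and the bound follows with $\sup_{t\in I_\alpha}|g''(t)|$. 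This step holds uniformly for interior $\alpha$ (where $I_\alpha=[-1,1]$) and for extrapolative $\alpha$ with $|\alpha|>1$ (where $I_\alpha$ extends beyond the endpoints). This is precisely why the supremum is taken over $\mathrm{conv}\{-1,1,\alpha\}$.

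\textbf{Main obstacle.} The only subtle point is the passage from $\mathcal H$-valued to scalar functions. The scalar Rolle argument produces a point $\xi$ that depends on the chosen functional, so it cannot be applied componentwise to get a single $\xi$ for $\varphi$ itself. Choosing the functional $h$ aligned with the error vector $e$, before invoking Rolle, resolves this and makes the bound exact in norm.

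As a cross-check, one can instead use the Peano form $r(\alpha)=\int_{I_\alpha}G(\alpha,t)\,g''(t)\,dt$, where $G$ is the Green's function of $d^2/dt^2$ with zero values at $\pm1$ (extended appropriately when $|\alpha|>1$). The identity $\int|G|=\tfrac{|\alpha^2-1|}{2}$ then gives the same constant.
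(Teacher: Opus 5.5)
Your proof is correct and follows the same route as the paper: normalize the error vector to a unit vector, pass to the scalar function $g(t)=\langle\varphi(t),z\rangle_{\mathcal H}$, apply the scalar two-node Lagrange remainder formula, and use $|g''|\le\|\varphi''\|_{\mathcal H}$. The only difference is that you derive the remainder formula yourself, via the auxiliary function $r(t)-K(t^2-1)$ and two applications of Rolle's theorem, whereas the paper cites it.
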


\begin{proof}
Set $r \coloneqq \varphi(\alpha) - \ell_\varphi(\alpha)$. If $r = 0$, the claim follows trivially. Otherwise, define the normalized vector
\[
  z \coloneqq \frac{r}{\lVert r \rVert_{\mathcal H}}
\]
and consider the scalar-valued function
\[
  g(t) \coloneqq \langle \varphi(t), z \rangle_{\mathcal H},
\]
to which we apply the scalar Lagrange remainder formula with interpolation nodes $-1$ and $1$. Then there exists some $\xi \in I_\alpha$ such that
\[
  \langle r, z \rangle_{\mathcal H}
  = g(\alpha) - \frac{1 - \alpha}{2} g(-1) - \frac{1 + \alpha}{2} g(1)
  = \frac{\alpha^2 - 1}{2} g''(\xi).
\]
Since $\langle r, z \rangle_{\mathcal H} = \lVert r \rVert_{\mathcal H}$ by construction of $z$, and
\[
  \lvert g''(\xi) \rvert \le \lVert \varphi''(\xi) \rVert_{\mathcal H},
\]
the claimed inequality follows immediately.
\end{proof}

\begin{theorem}[A posteriori controlled merge estimate]
\label{thm:pde-merge-core}
Assume that
\[
  \mathcal S,\mathcal F\in C^2(J;\mathcal H_\rho),
  \qquad
  E(\alpha):=\mathcal F_\alpha-\mathcal S_\alpha .
\]
Define
\[
  \epsilon_-:=\norm{E(-1)}_\rho,
  \qquad
  \epsilon_+:=\norm{E(1)}_\rho,
  \qquad
  K_E(I_\alpha):=\sup_{t\in I_\alpha}\norm{E''(t)}_\rho .
\]
Then, for every $\alpha\in J$, one has the a posteriori estimate
\begin{keybox}{Core a posteriori continuation estimate}
\[
  \norm{\mathcal F_\alpha-\mathcal S_\alpha}_\rho
  \le
  \left|\frac{1-\alpha}{2}\right|\epsilon_-
  +
  \left|\frac{1+\alpha}{2}\right|\epsilon_+
  +
  \frac{|\alpha^2-1|}{2}K_E(I_\alpha).
\]
\end{keybox}
In particular, setting
\[
  K_F(I_\alpha):=\sup_{t\in I_\alpha}\norm{\mathcal F''_t}_\rho,
  \qquad
  K_S(I_\alpha):=\sup_{t\in I_\alpha}\norm{\mathcal S''_t}_\rho,
\]
yields the fully computable upper bound
\begin{keybox}{Computable curvature form}
\[
  \norm{\mathcal F_\alpha-\mathcal S_\alpha}_\rho
  \le
  \left|\frac{1-\alpha}{2}\right|\epsilon_-
  +
  \left|\frac{1+\alpha}{2}\right|\epsilon_+
  +
  \frac{|\alpha^2-1|}{2}
  \bigl(K_F(I_\alpha)+K_S(I_\alpha)\bigr).
\]
\end{keybox}
Furthermore, if the target parameter is $s\in J$ while the merge is performed at the coefficient
$\alpha\in J$, then the following stability estimate quantifies the penalty induced by
this coordinate mismatch:
\begin{keybox}{Penalty for coordinate mismatch}
\[
  \norm{\mathcal F_\alpha-\mathcal S_s}_\rho
  \le
  \norm{\mathcal F_\alpha-\mathcal S_\alpha}_\rho
  + L_S(I_{\alpha,s})|\alpha-s| .
\]
\end{keybox}
where
\[
  I_{\alpha,s}:={\rm conv}\{\alpha,s\},
  \qquad
  L_S(I_{\alpha,s}):=\sup_{t\in I_{\alpha,s}}\norm{\mathcal S'_t}_\rho .
\]
\end{theorem}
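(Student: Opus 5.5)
The plan is to apply Lemma~\ref{lem:hilbert-two-point} to the error curve $\varphi:=E|_{I_\alpha}$, viewed as a map into the Hilbert space $\mathcal H=\mathcal H_\rho$. Since $\mathcal S,\mathcal F\in C^2(J;\mathcal H_\rho)$ and $I_\alpha\subset J$ (because $J$ is an interval containing $-1,1,\alpha$), the curve $E$ is $C^2$ on $I_\alpha$, so the lemma applies. We then split
\[
  E(\alpha)=\ell_E(\alpha)+\bigl(E(\alpha)-\ell_E(\alpha)\bigr),
  \qquad
  \ell_E(\alpha)=\tfrac{1-\alpha}{2}E(-1)+\tfrac{1+\alpha}{2}E(1).
\]
The triangle inequality and homogeneity of $\norm{\cdot}_\rho$ bound the first term by $\bigl|\tfrac{1-\alpha}{2}\bigr|\epsilon_-+\bigl|\tfrac{1+\alpha}{2}\bigr|\epsilon_+$. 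We keep the absolute values, since for $|\alpha|>1$ one of the two coefficients is negative. The lemma bounds the second term by $\tfrac{|\alpha^2-1|}{2}K_E(I_\alpha)$. Adding the two bounds gives the core estimate.

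For the computable curvature form we use linearity of differentiation: $E''(t)=\mathcal F''_t-\mathcal S''_t$. The triangle inequality gives $\norm{E''(t)}_\rho\le\norm{\mathcal F''_t}_\rho+\norm{\mathcal S''_t}_\rho$. Taking the supremum over $t\in I_\alpha$, and using that the supremum of a sum is at most the sum of the suprema, yields $K_E(I_\alpha)\le K_F(I_\alpha)+K_S(I_\alpha)$. Substituting this into the core estimate gives the second bound.

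For the coordinate-mismatch bound we write $\mathcal F_\alpha-\mathcal S_s=(\mathcal F_\alpha-\mathcal S_\alpha)+(\mathcal S_\alpha-\mathcal S_s)$ and apply the triangle inequality. It then remains to show $\norm{\mathcal S_\alpha-\mathcal S_s}_\rho\le L_S(I_{\alpha,s})|\alpha-s|$. This is the mean value inequality for $C^1$ Hilbert-valued curves on $I_{\alpha,s}\subset J$. I would prove it either from the fundamental theorem of calculus for the Bochner/Riemann integral,
\[
  \mathcal S_\alpha-\mathcal S_s=\int_s^\alpha \mathcal S'_t\,dt,
\]
followed by the norm bound on the integral, or, in the same spirit as the lemma's proof, by pairing with the unit vector $z$ in the direction of $\mathcal S_\alpha-\mathcal S_s$ and applying the scalar mean value theorem to $t\mapsto\langle\mathcal S_t,z\rangle$.

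None of these steps is a real obstacle. The only care needed is (i) checking that $I_\alpha$ and $I_{\alpha,s}$ lie in $J$, so that the $C^2$ hypothesis and the suprema make sense, and (ii) retaining the absolute values on the affine weights in the extrapolative regime $|\alpha|>1$.
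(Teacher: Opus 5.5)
Your proposal is correct and follows the paper's proof essentially step for step. You apply Lemma~\ref{lem:hilbert-two-point} to the error curve $E$ and use the triangle inequality on the endpoint interpolant $\ell_E$, then use $K_E(I_\alpha)\le K_F(I_\alpha)+K_S(I_\alpha)$, and finally handle the mismatch penalty with the triangle inequality and $\mathcal S_\alpha-\mathcal S_s=\int_s^\alpha \mathcal S'_t\,dt$. Your explicit checks that $I_\alpha, I_{\alpha,s}\subset J$, and that the absolute values on the affine weights must be kept when $|\alpha|>1$, are details the paper leaves implicit.
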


\begin{proof}
We apply Lemma~\ref{lem:hilbert-two-point} to the error curve
$E:J\to\mathcal H_\rho$. Its endpoint interpolant is given by
\[
  \ell_E(\alpha)
  = \frac{1-\alpha}{2}E(-1)+\frac{1+\alpha}{2}E(1).
\]
Consequently,
\[
  \norm{E(\alpha)-\ell_E(\alpha)}_\rho
  \le \frac{|\alpha^2-1|}{2}K_E(I_\alpha).
\]
By the triangle inequality, we obtain
\[
  \norm{\ell_E(\alpha)}_\rho
  \le
  \left|\frac{1-\alpha}{2}\right|\epsilon_-
  +
  \left|\frac{1+\alpha}{2}\right|\epsilon_+ .
\]
Combining these two inequalities yields the first estimate. The second estimate
follows from the bound
\[
  K_E(I_\alpha)\le K_F(I_\alpha)+K_S(I_\alpha).
\]
Finally, we have
\[
  \norm{\mathcal F_\alpha-\mathcal S_s}_\rho
  \le
  \norm{\mathcal F_\alpha-\mathcal S_\alpha}_\rho
  +
  \norm{\mathcal S_\alpha-\mathcal S_s}_\rho,
\]
and
\[
  \mathcal S_\alpha-\mathcal S_s=\int_s^\alpha \mathcal S'_t\,dt,
  \qquad
  \norm{\mathcal S_\alpha-\mathcal S_s}_\rho
  \le L_S(I_{\alpha,s})|\alpha-s|.
\]
\end{proof}

\begin{remark}[Second-order local form]
\label{rem:second-order-local}
Suppose the physical parameter is given by $\mu(s)=\mu_c+hs$ and that
\[
  \sup_{\mu}\norm{\partial_{\mu\mu}^2\mathcal S_\mu}_\rho\le \widetilde K_S .
\]
Then
\[
  K_S(I_\alpha)\le |h|^2\widetilde K_S .
\]
If, in addition, the induced neural path satisfies
\[
  K_F(I_\alpha)\le M_F\norm{\Delta^{\rm dir}}^2
  \quad\text{and}\quad
  \norm{\Delta^{\rm dir}}\le L_\theta |h|,
\]
then, for the coordinate-conditioned choice $\alpha=s$,
\begin{keybox}{Second-order local form}
\[
  \norm{\mathcal F_s-\mathcal S_s}_\rho
  \le
  \left|\frac{1-s}{2}\right|\epsilon_-
  +
  \left|\frac{1+s}{2}\right|\epsilon_+
  +
  \frac{|s^2-1|}{2}
  \bigl(M_F L_\theta^2+\widetilde K_S\bigr)|h|^2 .
\]
\end{keybox}

Consequently, modulo the endpoint error, the continuation error exhibits second-order scaling with respect to the half-gap between the physical
endpoints. 
The result is local in the sense that the constants involved are required only on the interval $J$ under consideration.
\end{remark}

\subsection{Why same-base endpoint updates carry directional information}
\label{sec:app:same-base-direction}

The previous estimate characterizes the function-space path after a particular trajectory has been fixed. We now formulate the minimal result that justifies why endpoint updates obtained from a common reference initialization induce a well-defined signed direction.

\begin{proposition}[Endpoint task vectors as finite differences]
\label{prop:task-vector-finite-difference}
Let $\Theta$ be a normed parameter space. Suppose that, in a neighborhood of the
central physical coordinate $q = 0$, the fine-tuning procedure selects a $C^3$ local expert branch
$q \mapsto \theta^\star(q) \in \Theta$. Define
\[
  U(q) := \theta^\star(q) - \theta_0.
\]
Assume that the trained endpoint parameters satisfy
\[
  \theta_- = \theta^\star(-h) + r_- ,
  \qquad
  \theta_+ = \theta^\star(h) + r_+ .
\]
Define the curvature bounds
\[
  M_2 := \sup_{|q|\le |h|}\norm{U''(q)},
  \qquad
  M_3 := \sup_{|q|\le |h|}\norm{U'''(q)} .
\]
Then the following quantitative finite-difference estimates hold.
\begin{keybox}{Endpoint updates approximate a central adaptation and a physical tangent}
\[
  \norm{\Delta^{\rm sh} - U(0)}
  \le
  \frac{|h|^2}{2} M_2 + \frac{\norm{r_-} + \norm{r_+}}{2},
\]
\[
  \norm{\Delta^{\rm dir} - h U'(0)}
  \le
  \frac{|h|^3}{6} M_3 + \frac{\norm{r_-} + \norm{r_+}}{2}.
\]
\end{keybox}
In particular, the shared update provides an approximation of the central adaptation $U(0)$, while the directional update provides an approximation of the first derivative $U'(0)$ of the local expert branch with respect to the physical parameter.

\end{proposition}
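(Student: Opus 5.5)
The argument is a direct Taylor expansion of the branch $U$ about $q=0$, followed by the triangle inequality to absorb the endpoint residuals $r_\pm$. First we rewrite the endpoint offsets in terms of $U$. Since $\Delta_\pm=\theta_\pm-\theta_0$, the assumptions give $\Delta_-=U(-h)+r_-$ and $\Delta_+=U(h)+r_+$. Hence
\[
  \Delta^{\rm sh}=\frac{U(h)+U(-h)}{2}+\frac{r_-+r_+}{2},
  \qquad
  \Delta^{\rm dir}=\frac{U(h)-U(-h)}{2}+\frac{r_+-r_-}{2}.
\]
The residual terms contribute at most $(\norm{r_-}+\norm{r_+})/2$ in either case. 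What remains is to bound the symmetric second difference $\tfrac12(U(h)+U(-h))-U(0)$ and the central difference $\tfrac12(U(h)-U(-h))-hU'(0)$.

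Because $\Theta$ is only a normed space, I will not use a mean-value form with an intermediate point. Instead I will use Taylor's formula with integral remainder, which is valid for $C^k$ maps into a Banach space. If $\Theta$ is not complete, I will pass to its completion, which is harmless because all the norms involved are unchanged. Alternatively, one can pair with a norming functional $z^*$ of unit dual norm, exactly as in the proof of Lemma~\ref{lem:hilbert-two-point}, and apply the scalar Lagrange remainder. I prefer the integral form because it makes the constants transparent.

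\textbf{Shared component.} We have $U(\pm h)=U(0)\pm hU'(0)+\int_0^{\pm h}(\pm h-t)U''(t)\,dt$. Averaging the two expansions cancels the first-order terms. Each remainder has norm at most $\tfrac{h^2}{2}M_2$, so their average also has norm at most $\tfrac{h^2}{2}M_2$. This gives the first bound.

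\textbf{Directional component.} Expand to third order: $U(\pm h)=U(0)\pm hU'(0)+\tfrac{h^2}{2}U''(0)+R_3(\pm h)$, where $\norm{R_3(\pm h)}\le \tfrac{|h|^3}{6}M_3$. Taking half the difference cancels both the even-order terms $U(0)$ and $\tfrac{h^2}{2}U''(0)$. The remainder is $\tfrac12(R_3(h)-R_3(-h))$, whose norm is at most $\tfrac{|h|^3}{6}M_3$. This yields the second bound.

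The main care point is the sign of $h$ and the orientation of the integrals when $h<0$. Writing the remainders as $\int_0^1 \frac{(1-\tau)^{k-1}}{(k-1)!}(\pm h)^k U^{(k)}(\pm\tau h)\,d\tau$ handles this uniformly. It also confirms that only values of $U^{(k)}$ on $|q|\le|h|$ enter, which is exactly where $M_2$ and $M_3$ are taken. Everything else in the argument is routine.
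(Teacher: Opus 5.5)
Your proposal is correct and follows essentially the same route as the paper. Both arguments write $\Delta^{\rm sh}$ and $\Delta^{\rm dir}$ as the symmetric and antisymmetric averages of $U(\pm h)$ plus the residual terms $\tfrac12(r_-\pm r_+)$, then bound the integral-form Taylor remainders by $\tfrac{|h|^2}{2}M_2$ and $\tfrac{|h|^3}{6}M_3$. Your care about completing $\Theta$, the sign of $h$, and using the first-order integral remainder for the shared part only makes explicit what the paper's terser proof leaves implicit.
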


\begin{proof}
For the exact solution branch, Taylor's theorem yields
\[
  U(h) = U(0) + hU'(0) + \frac{h^2}{2}U''(0) + R_+,
  \qquad
  U(-h) = U(0) - hU'(0) + \frac{h^2}{2}U''(0) + R_-,
\]
where the integral form of the remainder implies the symmetric bound
\[
  \left\|\frac{U(h) + U(-h)}{2} - U(0)\right\|
  \le \frac{|h|^2}{2} M_2,
\]
and, analogously, the third-order remainder yields
\[
  \left\|\frac{U(h) - U(-h)}{2} - hU'(0)\right\|
  \le \frac{|h|^3}{6} M_3.
\]
The endpoint residuals contribute linearly through the relations
\[
  \Delta^{\rm sh} = \frac{U(-h) + U(h)}{2} + \frac{r_- + r_+}{2},
  \qquad
  \Delta^{\rm dir} = \frac{U(h) - U(-h)}{2} + \frac{r_+ - r_-}{2}.
\]
\end{proof}

\begin{remark}[A sufficient condition in a local chart]
\label{rem:local-chart}
Let $\mathcal L(q,\theta)$ denote a regularized population fine-tuning objective expressed in a local chart of the parameter space. 
Suppose that $\mathcal L\in C^{r+1}$ and that, after fixing a local gauge, restricting to the chart selected by the optimizer, or introducing an appropriate local regularization, the Hessian $\nabla_{\theta\theta}^2\mathcal L(0,\theta^\star(0))$ is invertible. 
Then, by the implicit function theorem, there exists a $C^r$ solution branch $q\mapsto\theta^\star(q)$. 
In particular, the $C^3$ regularity of the branch used in Proposition~\ref{prop:task-vector-finite-difference} is guaranteed by the assumption $\mathcal L\in C^4$ together with the same nondegeneracy condition.
\end{remark}

\subsection{PDE regularity behind the curvature assumption}
\label{sec:app:pde-regularity}

We finally give a standard sufficient condition that guarantees twice continuously differentiable dependence of the associated solution operator of the PDE. 
This regularity requirement is not imposed on the neural network itself; instead, it constitutes a local well-posedness assumption on the underlying family of partial differential equations.

\begin{proposition}[Smooth dependence of the PDE solution operator]
\label{prop:pde-smooth-dependence}
Let the PDE be written as a residual equation
\[
  \mathfrak R(u,a,\mu)=0,
\]
where $u\in\mathcal U$, $\mathfrak R(u,a,\mu)\in\mathcal Z$, and
$\mathcal U,\mathcal Z$ are Banach spaces continuously embedded into the
trajectory norm used in $\mathcal Y_T$.  Suppose that, for $\rho$-a.e. input
$a$ and all $\mu$ in a neighborhood of the interval of interest:
\begin{enumerate}
  \item $\mathfrak R$ is $C^2$ in $(u,\mu)$;
  \item there is a solution $u(a,\mu)$ with $\mathfrak R(u(a,\mu),a,\mu)=0$;
  \item $D_u\mathfrak R(u(a,\mu),a,\mu):\mathcal U\to\mathcal Z$ is invertible
  with uniformly bounded inverse;
  \item the first and second derivatives of $\mathfrak R$ entering the
  sensitivity equations are bounded by a $\rho$-square-integrable envelope.
\end{enumerate}
Then $\mu\mapsto\mathcal S_\mu\in\mathcal H_\rho$ is $C^2$.  Moreover,
\[
  \sup_\mu \norm{\partial_{\mu\mu}^2\mathcal S_\mu}_\rho <\infty .
\]
Consequently, for $\mu(s)=\mu_c+hs$,
\[
  \frac{d^2}{ds^2}\mathcal S_{\mu(s)}
  = h^2\partial_{\mu\mu}^2\mathcal S_{\mu(s)},
  \qquad
  K_S\lesssim |h|^2 .
\]
\end{proposition}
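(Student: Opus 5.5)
The plan is the implicit function theorem applied pointwise in the input $a$, followed by a dominated-convergence argument that lifts pointwise smoothness to smoothness of $\mu\mapsto\mathcal S_\mu$ as a map into $\mathcal H_\rho=L^2(\mathcal X,\rho;\mathcal Y_T)$. The closing chain-rule identity is then immediate.

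\textbf{Step 1: pointwise smoothness.} Fix a $\rho$-typical input $a$. Hypotheses 1--3 say that $\mathfrak R(\cdot,a,\cdot)$ is $C^2$ and that $D_u\mathfrak R$ is invertible at the solution. The Banach-space implicit function theorem then gives a locally unique $C^2$ branch $\mu\mapsto u(a,\mu)\in\mathcal U$. Since the branch exists for all $\mu$ in the neighborhood (hypothesis 2), these local branches patch together by uniqueness. Differentiating $\mathfrak R(u(a,\mu),a,\mu)=0$ yields the sensitivity equations
\[
  \partial_\mu u=-(D_u\mathfrak R)^{-1}\partial_\mu\mathfrak R,
\]
\[
  \partial_{\mu\mu}u=-(D_u\mathfrak R)^{-1}\Bigl(D_{uu}\mathfrak R[\partial_\mu u,\partial_\mu u]+2D_{u\mu}\mathfrak R[\partial_\mu u]+\partial_{\mu\mu}\mathfrak R\Bigr).
\]
Call the uniform bound on the inverse $C_{\rm inv}$, and use the envelope $G(a)$ from hypothesis 4. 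This gives $\norm{\partial_\mu u}_{\mathcal U}\le C_{\rm inv}G(a)$ and $\norm{\partial_{\mu\mu}u}_{\mathcal U}\le C(G(a)+G(a)^2+\dots)$. Because $\mathcal U$ embeds continuously into $\mathcal Y_T$, the same bounds hold in the trajectory norm, up to a constant.

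\textbf{Step 2: lift to $\mathcal H_\rho$.} This is the main obstacle. To conclude that $\mu\mapsto\mathcal S_\mu$ is $C^2$ in $L^2(\rho)$, the pointwise derivative bounds must be square-integrable in $a$. The second derivative contains the quadratic term $G^2$, so $G\in L^2$ alone is not enough. I read hypothesis 4 as saying that the whole combination of derivatives entering the sensitivity equations is dominated by a single square-integrable envelope. Concretely, I would either take the envelope to bound $\norm{\partial_\mu u}$ and $\norm{\partial_{\mu\mu}u}$ directly, or assume $G\in L^4$, and I would state explicitly which reading is used.

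Given that domination, the argument is short. Write the difference quotients $(\mathcal S_{\mu+\eta}-\mathcal S_\mu)/\eta$ via the mean value inequality. Dominated convergence in $L^2(\rho)$ then shows that $\mathcal S'_\mu$ exists and equals $a\mapsto\partial_\mu u(a,\mu)$. Repeating the argument gives $\mathcal S''_\mu$. Continuity in $\mu$ of both derivatives follows the same way, from pointwise continuity plus domination. Measurability in $a$ holds because pointwise limits of measurable maps are measurable.

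\textbf{Step 3: uniform bound and rescaling.} Integrating the squared pointwise bound gives
\[
  \sup_\mu\norm{\partial^2_{\mu\mu}\mathcal S_\mu}_\rho\le C\,\norm{\text{envelope}}_{L^2(\rho)}<\infty.
\]
Finally, $\mu(s)=\mu_c+hs$ is affine, so the chain rule gives $\frac{d^2}{ds^2}\mathcal S_{\mu(s)}=h^2\,\partial^2_{\mu\mu}\mathcal S_{\mu(s)}$. Taking the supremum gives $K_S\le h^2\sup_\mu\norm{\partial^2_{\mu\mu}\mathcal S_\mu}_\rho\lesssim|h|^2$, which matches the constant used in Remark~\ref{rem:second-order-local}.
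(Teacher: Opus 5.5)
Your proposal is correct and follows essentially the same route as the paper: the pointwise implicit function theorem in $a$, then the first- and second-order sensitivity equations, then domination of $u_\mu$ and $u_{\mu\mu}$ via the uniformly bounded inverse and the envelope, then dominated convergence to lift to $C^2$ in $\mathcal H_\rho$, and finally the affine chain rule giving $K_S\lesssim|h|^2$. Your caveat about the nonlinear term $D_{uu}^2\mathfrak R[u_\mu,u_\mu]$ is well taken; the paper passes over it by simply asserting that the pointwise bounds on $u_\mu$ and $u_{\mu\mu}$ are dominated by the square-integrable envelope, which is exactly the reading of hypothesis 4 you propose to make explicit.
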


\begin{proof}
For each admissible input $a$, the implicit function theorem in Banach spaces ensures the local $C^2$-regularity of the mapping $\mu \mapsto u(a,\mu)$. Differentiating the identity
\[
  \mathfrak R(u(a,\mu),a,\mu)=0
\]
with respect to $\mu$ yields the first-order sensitivity relation
\[
  D_u\mathfrak R\, u_\mu = -D_\mu\mathfrak R,
\]
and, upon differentiating once more with respect to $\mu$, the second-order sensitivity equation
\[
  D_u\mathfrak R\, u_{\mu\mu}
  = -D_{uu}^2\mathfrak R[u_\mu,u_\mu]
    -2D_{u\mu}^2\mathfrak R[u_\mu]
    -D_{\mu\mu}^2\mathfrak R .
\]
The assumed uniform invertibility of $D_u\mathfrak R$ together with the uniform bounds on the derivatives of $\mathfrak R$ implies pointwise estimates on $u_\mu$ and $u_{\mu\mu}$, which are dominated by the prescribed square-integrable envelope. 
By the dominated convergence theorem, these pointwise differentiability properties imply that the mapping $\mu \mapsto \mathcal S_\mu$ is in fact twice continuously differentiable as a map from the parameter space into $L^2(\mathcal X,\rho;\mathcal Y_T)$.
\end{proof}

\begin{remark}[Nonsmooth architectures and empirical curvature]
If the mapping $\alpha \mapsto \mathcal F_\alpha$ is not known to be twice continuously differentiable ($C^2$), for instance due to nonsmooth activation functions or discrete algorithmic components, the functional curvature term in Theorem~\ref{thm:pde-merge-core} can be approximated via the second finite difference
\[
  \widehat K_F(\alpha,\delta)
  :=
  \frac{\lVert \mathcal F_{\alpha+\delta} - 2\mathcal F_\alpha
  + \mathcal F_{\alpha-\delta} \rVert_\rho}{\delta^2} .
\]
This quantity constitutes the discrete analogue of $K_F$ and coincides with what is naturally accessed through $\alpha$-sweep diagnostics. 
Accordingly, the mathematical bound above is most appropriately interpreted as a local continuation estimate, whose neural-curvature assumption ought to be validated or systematically ablated through empirical investigation.
\end{remark}

\paragraph{Scope.}
The theorem is formulated on a fixed local interval $J$. 
It does not address phenomena such as breakdown of PDE well-posedness, the formation of shocks or bifurcations, endpoint experts trained in distinct attraction basins, or merge trajectories exhibiting large functional curvature. 
Under the given local regularity and compatibility conditions, endpoint experts that arise from the same initialization basin produce finite-difference estimates of a physically interpretable tangent direction, and the merged model follows the corresponding trajectory in the space of PDE operators with a quantitatively controlled error in the associated two-point continuation.

\FloatBarrier

\section{Detailed Related Work}
\label{sec:app:related}

\paragraph{Neural operators and PDE benchmarks.}
Neural operators learn mappings between spaces of functions, offering a unified paradigm for data-driven surrogate models of parametric PDEs.
The Fourier Neural Operator introduced spectral convolution as an efficient operator layer for regular grids, while DeepONet proposed an operator-learning architecture using separate branch and trunk networks~\cite{li2021fourier,lu2021learning,kovachki2023neural}.
Since then, the class of architectures has grown in many directions, including graph-kernel and mesh-based operators, multiwavelet and factorized spectral layers, Galerkin- or transformer-style attention mechanisms, message-passing-based solvers, multigrid parameterizations, latent-space operators, and fast transformer architectures tailored to PDEs
~\cite{li2020neural,pfaff2021meshgraphnets,gupta2021multiwavelet,
cao2021choose,tran2023factorized,brandstetter2022message,hao2023gnot,
he2024mgno,wang2024latent,wu2024transolver}.
In parallel, benchmark collections such as PDEBench and APEBench have refocused evaluation on families of dynamical systems, multi-step rollout horizons, and controlled variations in the governing parameters
~\cite{takamoto2022pdebench,koehler2024apebench}.

\paragraph{PDE foundation models and large physical surrogates.}
Recent work has also started to develop neural operators and spatiotemporal surrogate models that span broader families of PDEs, physical quantities, and simulation settings.
DPOT~\cite{hao2024dpot}, Poseidon~\cite{herde2024poseidon}, multiple-physics pretraining~\cite{rahman2024pretraining}, the Well~\cite{ohana2024thewell}, and Walrus~\cite{mccabe2025walrus} exemplify a shift toward reusable models and datasets for continuum dynamics.
Parallel advances in weather and climate modeling demonstrate that learned physical surrogates can serve as large-scale, long-range forecasting systems~\cite{kurth2023fourcastnet,bi2023pangu,lam2023graphcast,
kochkov2024neuralgcm,bodnar2025aurora,allen2025aardvark}.
Together, these developments underscore the importance of efficient adaptation once a solver has already been trained.

\paragraph{Conditioning, geometry, and adaptation for PDE families.}
A common strategy for handling parameter changes is to condition the solver on metadata, geometric information, prompts, or other domain-aware encodings.
Geo-FNO~\cite{li2023geofno}, GINO~\cite{li2023gino}, and diffeomorphism neural operators~\cite{zhao2025dno} all alter how functions or domains are represented so that the learned operator can generalize across different geometries and parameter settings.
Mesh-agnostic and symmetry-aware solvers follow a related design philosophy, modifying the state representation and the message-passing or equivariant operator components~\cite{pfaff2021meshgraphnets,boussif2022magnet,helwig2023group,wu2025mpg}.
Transfer operator learning adapts a source operator to a shifted target distribution via additional optimization, whereas in-context operator learning conditions on auxiliary examples or prompts at inference time~\cite{goswami2022deeptransfer,yang2023context}.
Parameter-efficient fine-tuning techniques such as low-rank adaptation lower the storage cost of keeping many adapted models, but still introduce target-specific parameter updates~\cite{hu2022lora}.
CCM is complementary to these approaches: rather than altering the architecture or retraining for each target regime, it composes pre-trained endpoint experts and uses physical metadata or a brief observed prefix to select a coordinate within the induced family of checkpoints.

\paragraph{Model merging and task vectors.}
Model merging studies when separately fine-tuned models, typically starting from a shared initialization, can be combined directly in weight space.
Model soups average the fine-tuned parameters, Fisher merging incorporates curvature information, and task arithmetic interprets fine-tuning updates as vectors that can be added or subtracted~\cite{wortsman2022model,matena2022merging,ilharco2023editing}.
Subsequent approaches tackle issues such as interference, permutation symmetries, tangent-space formulations, adaptive task weighting, and more scalable merging schemes~\cite{singh2020model,ainsworth2023git,yadav2023ties,ortizjimenez2023task,
yang2024adamerging,yu2024dare,huang2024emr,yan2025calm,
li2025map}.
In nearly all of this work, tasks are treated as discrete, unordered entities, and the merge coefficient is typically chosen via validation performance or a fixed algorithmic heuristic.
PDE families, in contrast, introduce a different kind of structure: tasks are indexed by continuous physical parameters such as diffusivity, viscosity, or geometry.
This enables a direct comparison between a loss-optimized merging coefficient and an external physical coordinate, which is precisely the diagnostic role of the $\alpha$-coordinate experiments in the main text.

\FloatBarrier
\section{Experimental Protocols and Hyperparameters}
\label{sec:app:protocols}

\paragraph{Evaluation conventions.}
The appendix adheres to the main-text configuration of \emph{family-axis} composition for shared-base neural PDE experts. 
Out-of-distribution evaluation corresponds to extrapolation along the physical axis that parameterizes the endpoint experts. 
For CCM-Prefix, the observed prefix segment of the trajectory is used to determine the coefficient, and all reported future-rollout L2 errors are computed over the subsequent prediction horizon.

\subsection{Training and merge protocol}
\label{sec:app:training}

The final reported experiments employ a best-checkpoint lineage: experts are initialized from the corresponding family-level \texttt{best} checkpoint.  
For RDB, the reported endpoint branch is selected according to the local-chart criterion described below, characterized by a low learning rate, the absence of additional noise, and the use of support--anchor mixing.  
NS2D and DiffReact/DPOT experts are likewise initialized from their respective base-best checkpoints.

Training is conducted using either rollout-aware or random-time autoregressive supervision, depending on the specific benchmark. Evaluation metrics are reported as full-rollout or future-rollout L2 errors over the prescribed temporal horizons.  
Normalizer configurations and data-scaling procedures are held fixed by the training and evaluation protocol.

All reported training runs, fine-tuning procedures, merge-sweep experiments, and evaluation jobs were executed on a single server equipped with eight NVIDIA A800 GPUs. 
When multiple sweeps were required, independent single-GPU jobs were parallelized across the available devices.

\begin{table}[!htbp]
\centering
\apptablefont
\caption{Training settings for the final reported model families.}
\label{tab:app-training-settings}
\begin{tabular}{p{0.18\linewidth}p{0.31\linewidth}p{0.39\linewidth}}
\toprule
\textbf{Model family} & \textbf{Family-base setting} & \textbf{Endpoint-expert setting} \\
\midrule
RDB FNO & width 64, modes 16, lr $5{\times}10^{-4}$ & lr $2{\times}10^{-6}$, no noise, support--anchor mixing \\
NS2D FNO & width 64, modes 16, lr $10^{-4}$ & lr $5{\times}10^{-5}$, no noise \\
DPOT DiffReact & width 512, lr $5{\times}10^{-4}$ & lr $10^{-4}$, no noise \\
FNO scale & small/base/large FNO variants & endpoint experts initialized from base-best checkpoints \\
\bottomrule
\end{tabular}
\end{table}

\paragraph{Locality criterion for endpoint branches.}
The reported endpoint branches are selected to satisfy local-chart stability
around the shared base.
For RDB, the criterion uses conservative endpoint gaps, no added training noise,
a low expert learning rate, and support--anchor mixing.
These choices keep endpoint updates in the regime where the linear
shared/directional decomposition is meaningful.

\FloatBarrier
\subsection{CCM algorithmic details}
\label{sec:app:algorithm}

Given base checkpoint $\theta_0$ and endpoint experts
$\theta_{\rm low},\theta_{\rm high}$, all CCM variants share the same weight
family:
\[
\Delta_{\rm low}=\theta_{\rm low}-\theta_0,\qquad
\Delta_{\rm high}=\theta_{\rm high}-\theta_0,
\]
\[
\Delta^{+}=(\Delta_{\rm low}+\Delta_{\rm high})/2,\qquad
\Delta^{-}=(\Delta_{\rm high}-\Delta_{\rm low})/2,
\]
\[
\theta(\alpha)=\theta_0+\Delta^{+}+\alpha\Delta^{-}.
\]
The selection mechanism varies according to the information condition.
CCM-Coord uses the known normalized coordinate $s(\lambda)$; CCM-Scale estimates a single scaling parameter $\gamma$; and CCM-Prefix performs a search over a finite set of candidate $\alpha$ values using a short rollout prefix.

\begin{table}[!htbp]
\centering
\apptablefont
\caption{Coefficient-selection rules for endpoint composition.}
\label{tab:app-ccm-variants}
\resizebox{\linewidth}{!}{\begin{tabular}{llll}
\toprule
\textbf{Rule} & \textbf{Input information} & \textbf{Coefficient} & \textbf{Checkpoint used for rollout} \\
\midrule
Endpoint average & none & $\alpha=0$ & one composed checkpoint \\
Best fixed $\alpha$ & validation sweep & one validation-selected scalar & one composed checkpoint \\
CCM-Coord & known coordinate $s(\lambda)$ & $\alpha=s(\lambda)$ & one composed checkpoint \\
CCM-Scale & calibrated scale $\gamma$ & $\alpha=\gamma s(\lambda)$ & one composed checkpoint \\
CCM-Prefix & $K$ observed rollout-prefix steps & prefix-selected bank value & one composed checkpoint \\
Oracle $\alpha$ & full target rollout & per-task best bank value & one composed checkpoint \\
\bottomrule
\end{tabular}}
\end{table}

\FloatBarrier
\subsection{Future-only CCM-Prefix protocol for RDB}
\label{sec:app:rdb-protocol}

RDB high-center uses a strict future-only protocol with
$\mathcal{I}_{\mathrm{cal}}=\{1,\ldots,4\}$ and
$\mathcal{I}_{\mathrm{fut}}=\{5,\ldots,T\}$.
The $\alpha$ bank is
\[
\mathcal{A}=\{-1.50,-1.25,-1.00,-0.75,-0.50,-0.25,0,
0.25,0.50,0.75,1.00,1.25,1.50\}.
\]
For each test sample and each candidate $\alpha$, the model rolls out only the first four calibration steps.
The selected $\alpha$ is the candidate with smallest prefix-selection L2.
The reported score is future-rollout L2 computed on $\mathcal{I}_{\mathrm{fut}}$.
This separates coefficient selection from the frames used for future-rollout scoring.

\paragraph{Interpretable selected $\alpha$ values.}
The selected $\alpha$ values are physically interpretable at endpoint and OOD regimes:
OOD-low and medium-gap low tasks select negative coefficients, while medium-gap high and OOD-high select positive coefficients.
Interior support tasks have higher variance, making RDB a prefix-calibrated case with sample-specific coefficient selection.

\FloatBarrier
\section{Datasets, PDE Equations, and Family Construction}
\label{sec:app:data}

\subsection{Controlled family axes}
\label{sec:app:data-axes}

All benchmarks use controlled one-dimensional family axes.
DiffReact varies one coefficient at a time and includes sparse small/medium gap tasks plus a dense $D_u$ evaluation axis.
NS2D varies viscosity around a support family with endpoint, medium, and OOD viscosity values.
RDB varies the inner dam-break height in the high-center setting with support, endpoint, and OOD low/high tasks.
For all domains, base training uses support regimes, endpoint experts are fine-tuned from the same best base checkpoint, and all reported merges are post hoc.

\paragraph{Why controlled axes matter.}
Each benchmark is organized around an interpretable physical coordinate, so an
$\alpha$ selected in weight space can be compared with an independent family axis.
This structure distinguishes the experiments from unordered task-merging benchmarks.

\subsection{Diffusion--reaction family}
\label{sec:app:data-diffreact}

DiffReact uses the PDEBench two-channel diffusion--reaction simulator:
\[
\partial_t u=D_u\Delta u+u-u^3-k-v,\qquad
\partial_t v=D_v\Delta v+u-v.
\]
The anchor task uses $D_u=10^{-3}$, $D_v=5\times10^{-3}$, and
$k=5\times10^{-3}$.
The $f_1$, $f_2$, and $f_3$ families vary $k$, $D_u$, and $D_v$ respectively
while holding the other two coefficients fixed.
For dense $f_2$ evaluation, the coordinate is normalized as
\[
s(D_u)=
\begin{cases}
\dfrac{D_u-D_{u,0}}{D_{u,\mathrm{high}}-D_{u,0}}, & D_u\geq D_{u,0},\\[1ex]
-\dfrac{D_{u,0}-D_u}{D_{u,0}-D_{u,\mathrm{low}}}, & D_u<D_{u,0},
\end{cases}
\]
with $D_{u,0}=10^{-3}$, $D_{u,\mathrm{low}}=8\times10^{-4}$, and
$D_{u,\mathrm{high}}=1.2\times10^{-3}$.
Thus $D_u(s)=10^{-3}+2\times10^{-4}s$ on the core dense grid.
The dense DiffReact axis is used as an evaluation/export grid.
The exported metadata records \texttt{train\_size=1} for exported tasks; the base and endpoint experts are trained on the sparse support and endpoint regimes.

\subsection{FourierFlow NS2D viscosity family}
\label{sec:app:data-ns2d}

NS2D is a periodic vorticity-form incompressible Navier--Stokes family on a $128\times128$ torus.
The state is vorticity $\omega$, with fixed forcing and viscosity $\nu$ as the family coordinate:
\[
\partial_t \omega + \mathbf{v}\cdot\nabla\omega
=
\nu\Delta\omega + f,\qquad
\nabla\cdot \mathbf{v}=0 .
\]
The center viscosity is $\nu_0=10^{-4}$.
The viscosity ratios are
\[
\begin{aligned}
\nu/\nu_0 &\in \{0.60,0.80,1.00,1.20,1.50\}
&&\text{for support and endpoint tasks},\\
\nu/\nu_0 &\in \{0.40,0.50,1.80,2.20\}
&&\text{for medium evaluation tasks},\\
\nu/\nu_0 &\in \{0.25,3.00\}
&&\text{for OOD evaluation tasks}.
\end{aligned}
\]
The medium and OOD regimes are evaluation-only in the exported metadata.
All tasks share seed banks so that a fixed sample id corresponds to the same initial condition and differs only by viscosity.

\subsection{Radial dam-break high-center family}
\label{sec:app:data-rdb}

RDB uses the PDEBench radial dam-break shallow-water simulator and reports height-channel rollout error.
The underlying shallow-water equations can be written as
\[
\partial_t h+\nabla\cdot(h\mathbf{u})=0,\qquad
\partial_t \mathbf{u}+\mathbf{u}\cdot\nabla\mathbf{u}+g\nabla h=0 .
\]
The family coordinate is the inner water-column height, while the outer height is fixed at $1.0$.
The canonical high-center family uses center $3.0$, base supports $2.0$ and $4.2$, endpoint experts $1.7$ and $4.8$, and OOD heights $1.05$ and $6.6$.
This range keeps all tasks in the valid regime $h_{\mathrm{inner}}>1.0$ while leaving enough room on both sides to test endpoint directionality and local-chart stability.

\begin{table}[!htbp]
\centering
\apptablefont
\caption{Dataset construction by family.}
\label{tab:app-data-summary}
\begin{tabular}{llll}
\toprule
\textbf{Family} & \textbf{Train support} & \textbf{Endpoint experts} & \textbf{Eval-only regimes} \\
\midrule
DiffReact $f_2$ & center and small-gap support & low/high $D_u$ & dense ID/OOD grid \\
NS2D & center, midlow, midhigh & low/high viscosity & medium and OOD viscosity \\
RDB high-center & center, midlow, midhigh & low/high inner height & OOD low/high height \\
\bottomrule
\end{tabular}
\end{table}

\FloatBarrier
\section{Additional Evaluation Results}
\label{sec:app:ablations}

The main text uses Fig.~\ref{fig:cross-domain-evidence-atlas} as a compact atlas of the cross-domain evidence. This appendix keeps the complementary numeric tables, additional ablations, and case visualizations rather than repeating every single-panel plot from the main text.

\subsection{DiffReact dense coordinate control}
\label{sec:app:diffreact-dense}

The dense $D_u$ axis directly tests whether $\alpha$ tracks a physical family coordinate.
The selected coefficient varies systematically with the normalized physical coordinate, and the coordinate-aware policy matches the family-axis oracle on the exported dense tasks.
The main text reports the corresponding OOD reduction from $0.1877$ to $0.0861$.
Figure~\ref{fig:coordinate-loss-strip} presents the dense coordinate law and the corresponding loss profile together with the NS2D and RDB comparisons.
The table below keeps the complementary negative control.

\begin{table}[!htbp]
\centering
\appdensefont
\caption{Wrong-sign coordinate control on dense DiffReact.
The control uses the same $\alpha$ bank and checkpoints as CCM-Coord but flips the
coordinate from $\alpha=s$ to $\alpha=-s$; errors increase across all task
groups.}
\label{tab:app-wrong-coordinate-control}
\begin{tabular}{lcccc}
\toprule
\textbf{Task group} & \textbf{\# tasks} & \textbf{CCM-Coord} & \textbf{Wrong sign} & \textbf{Wrong / Coord} \\
\midrule
Interpolation & 7 & \sci{8.03}{-2} & \sci{1.36}{-1} & 1.69x \\
Endpoint & 2 & \sci{8.22}{-2} & \sci{2.49}{-1} & 3.03x \\
OOD & 4 & \sci{8.61}{-2} & \sci{3.36}{-1} & 3.91x \\
All & 13 & \sci{8.24}{-2} & \sci{2.15}{-1} & 2.61x \\
\bottomrule
\end{tabular}

\end{table}

\FloatBarrier
\subsection{FNO scale robustness}
\label{sec:app:fno-scale}

The FNO scale sweep evaluates merge behavior across model capacity.
Average merge improves over the base at small, base, and large FNO scales.
The relative gain increases with model scale, suggesting that stronger family bases make endpoint updates more composable.

\begin{table}[!htbp]
\centering
\appdensefont
\caption{Full FNO scale sweep on DiffReact $f_3$.}
\label{tab:app-f3-scale-sweep}
\begin{tabular}{llrrrr}
\toprule
\textbf{Scale} & \textbf{Method} & \textbf{Overall} & \textbf{Endpoint} & \textbf{Worst} & \textbf{Rel. gain} \\
\midrule
small & Base & \sci{2.12}{-1} & \sci{2.24}{-1} & \sci{2.33}{-1} & 0\% \\
small & Expert low & \sci{1.98}{-1} & \sci{2.05}{-1} & \sci{2.50}{-1} & 6.4\% \\
small & Expert high & \sci{2.30}{-1} & \sci{2.40}{-1} & \sci{3.22}{-1} & -8.7\% \\
small & Endpoint average & \sci{1.92}{-1} & \sci{2.04}{-1} & \sci{2.10}{-1} & 9.6\% \\
base & Base & \sci{1.87}{-1} & \sci{2.04}{-1} & \sci{2.32}{-1} & 0\% \\
base & Expert low & \sci{1.68}{-1} & \sci{1.75}{-1} & \sci{2.35}{-1} & 10.2\% \\
base & Expert high & \sci{1.84}{-1} & \sci{1.93}{-1} & \sci{2.80}{-1} & 1.8\% \\
base & Endpoint average & \sci{1.49}{-1} & \sci{1.65}{-1} & \sci{1.71}{-1} & 20.2\% \\
large & Base & \sci{1.50}{-1} & \sci{1.68}{-1} & \sci{1.75}{-1} & 0\% \\
large & Expert low & \sci{1.35}{-1} & \sci{1.41}{-1} & \sci{2.15}{-1} & 9.9\% \\
large & Expert high & \sci{1.48}{-1} & \sci{1.56}{-1} & \sci{2.48}{-1} & 1.1\% \\
large & Endpoint average & \sci{1.13}{-1} & \sci{1.32}{-1} & \sci{1.33}{-1} & 24.8\% \\
\bottomrule
\end{tabular}

\end{table}

\FloatBarrier
\subsection{Large-FNO merge baselines}
\label{sec:app:fno-large-baselines}

The large FNO line includes generic merge heuristics on the same checkpoints.
On this merge-friendly PDE axis, endpoint averaging gives lower error than task-arithmetic-style extrapolation.
This comparison separates the shared/directional construction from alternative task-vector edits on the same checkpoints.
All generic merge baselines in this comparison use the same base and endpoint checkpoints as the controlled merge methods.
Their coefficients, trimming/drop settings, and merge choices are selected from validation records under the same rule that excludes target-test labels for the other post-hoc policies.
They are non-conditional reference controls for scalar family-axis continuation.

\begin{table}[!htbp]
\centering
\appdensefont
\caption{Large-FNO DiffReact $f_3$ merge baselines.}
\label{tab:app-f3-large-baselines}
\begin{tabular}{lrrrr}
\toprule
\textbf{Method} & \textbf{Overall} & \textbf{Endpoint} & \textbf{Worst} & \textbf{mg-low} \\
\midrule
Base & \sci{1.499}{-1} & \sci{1.678}{-1} & \sci{1.746}{-1} & \sci{1.746}{-1} \\
Endpoint average & \sci{1.127}{-1} & \sci{1.320}{-1} & \sci{1.332}{-1} & \sci{1.332}{-1} \\
Task arithmetic & \sci{4.579}{3} & \sci{4.583}{3} & \sci{4.764}{3} & \sci{4.401}{3} \\
TIES & \sci{2.730}{4} & \sci{2.713}{4} & \sci{2.770}{4} & \sci{2.757}{4} \\
DARE & \sci{2.983}{2} & \sci{2.977}{2} & \sci{3.087}{2} & \sci{2.868}{2} \\
\bottomrule
\end{tabular}

\end{table}

\FloatBarrier
\begin{figure}[!htbp]
\centering
\includegraphics[width=\linewidth,height=0.50\textheight,keepaspectratio]{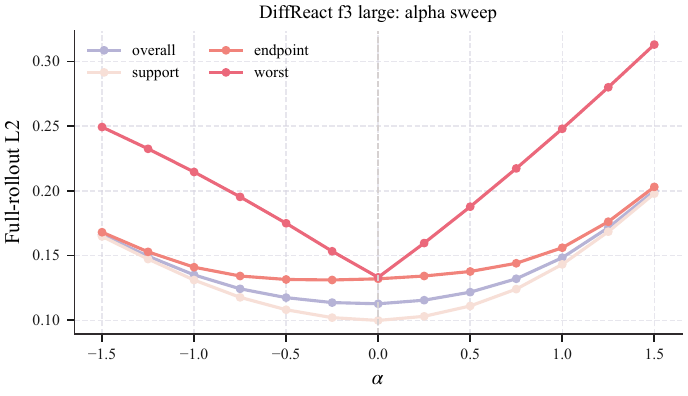}
\caption{Large-FNO DiffReact $f_3$ $\alpha$ sweep.}
\label{fig:app-f3-large-alpha-sweep}
\end{figure}

\FloatBarrier
\subsection{DPOT-style backbone validation}
\label{sec:app:dpot}

The DPOT-style line evaluates the same composition rule outside the FNO architecture.
The base reaches $0.0447$, average merge improves to $0.0391$, and CCM-Coord reaches $0.0183$, matching the oracle on this small coordinate bank.

\begin{table}[!htbp]
\centering
\appdensefont
\caption{DPOT-style DiffReact $f_3$ validation table.}
\label{tab:app-dpot-f3}
\begin{tabular}{lrrrr}
\toprule
\textbf{Method} & \textbf{Overall $\downarrow$} & \textbf{Support $\downarrow$} & \textbf{Endpoint $\downarrow$} & \textbf{Worst $\downarrow$} \\
\midrule
Base & \sci{4.47}{-2} & \sci{4.09}{-2} & \sci{5.04}{-2} & \sci{5.67}{-2} \\
Expert low & \sci{6.02}{-2} & \sci{5.94}{-2} & \sci{6.16}{-2} & \sci{1.043}{-1} \\
Expert high & \sci{5.83}{-2} & \sci{5.41}{-2} & \sci{6.46}{-2} & \sci{1.117}{-1} \\
Best fixed & \sci{3.91}{-2} & \sci{2.76}{-2} & \sci{5.63}{-2} & \sci{5.76}{-2} \\
Best endpoint fixed & \sci{4.10}{-2} & \sci{3.10}{-2} & \sci{5.61}{-2} & \sci{6.75}{-2} \\
Task arithmetic & \sci{7.35}{-2} & \sci{6.66}{-2} & \sci{8.39}{-2} & \sci{8.67}{-2} \\
CCM-Coord & \sci{1.83}{-2} & \sci{1.85}{-2} & \sci{1.81}{-2} & \sci{1.88}{-2} \\
Oracle per-task & \sci{1.83}{-2} & \sci{1.85}{-2} & \sci{1.81}{-2} & \sci{1.88}{-2} \\
\bottomrule
\end{tabular}

\end{table}

\FloatBarrier
\subsection{NS2D viscosity policy comparison}
\label{sec:app:ns2d-policy}

NS2D provides a viscosity-controlled incompressible-flow setting.
The local family coordinate is viscosity, which controls dissipation and long-horizon vorticity dynamics.
CCM-Scale gives the best overall validation mean among structured selectors, while CCM-Coord and CCM-Scale match on the OOD mean.
The main text reports the OOD comparison in Tab.~\ref{tab:ood-family-axis} and shows the coordinate relation and family-wise error trends in Figs.~\ref{fig:coordinate-loss-strip} and \ref{fig:cross-domain-evidence-atlas}. The full task-level and diagnostic tables are reported below in Tabs.~\ref{tab:app-method-ablation}, \ref{tab:app-task-comparison}, and \ref{tab:app-ns2d-physics}.

\FloatBarrier
\subsection{NS2D prefix-calibration budget}
\label{sec:app:ns2d-budget}

Metadata-based coordinate policies are the primary NS2D selectors.
The prefix-calibration budget shows that task-level prefix selection improves over the best fixed $\alpha$, with the lowest error at four prefix steps.
This indicates that early rollout error also carries information about the signed family direction.

\begin{figure}[!htbp]
\centering
\includegraphics[width=\linewidth,height=0.50\textheight,keepaspectratio]{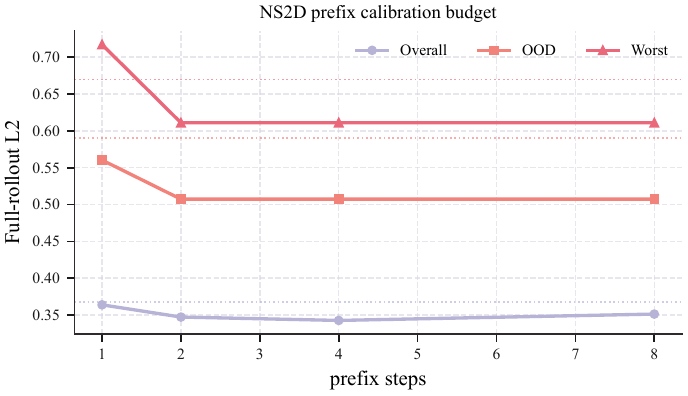}
\caption{NS2D prefix-calibration budget.}
\label{fig:app-ns2d-budget}
\end{figure}

\FloatBarrier
\subsection{RDB fixed alpha versus conditional composition}
\label{sec:app:rdb-fixed-conditional}

RDB is the least directly parameterized domain because the free-surface transient is not well described by a monotone scalar metadata coordinate.
The main text therefore uses CCM-Prefix for this setting.
The selected $\alpha$ varies by test sample and task, using the short prefix to choose the correct side of the endpoint direction instead of a single fixed coefficient.

\begin{figure}[!htbp]
\centering
\includegraphics[width=\linewidth,height=0.38\textheight,keepaspectratio]{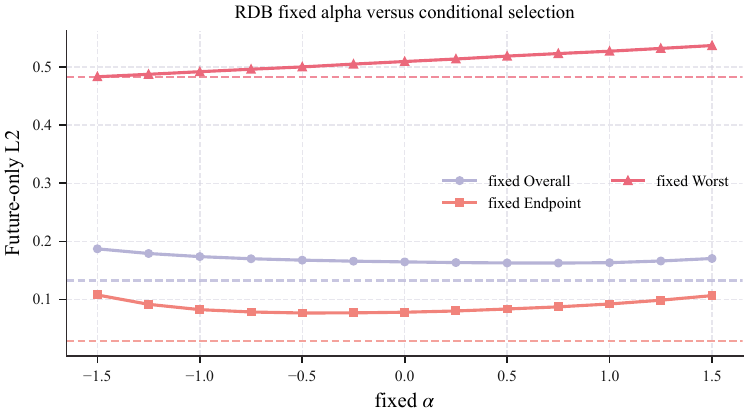}
\caption{RDB fixed-$\alpha$ frontier versus conditional CCM-Prefix.}
\label{fig:app-rdb-fixed-conditional}
\end{figure}

\begin{figure}[!htbp]
\centering
\includegraphics[width=\linewidth,height=0.38\textheight,keepaspectratio]{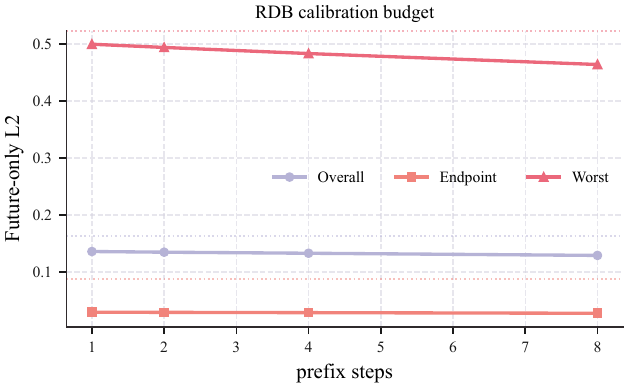}
\caption{RDB calibration-budget curve.}
\label{fig:app-rdb-budget}
\end{figure}

\begin{figure}[!htbp]
\centering
\includegraphics[width=\linewidth,height=0.38\textheight,keepaspectratio]{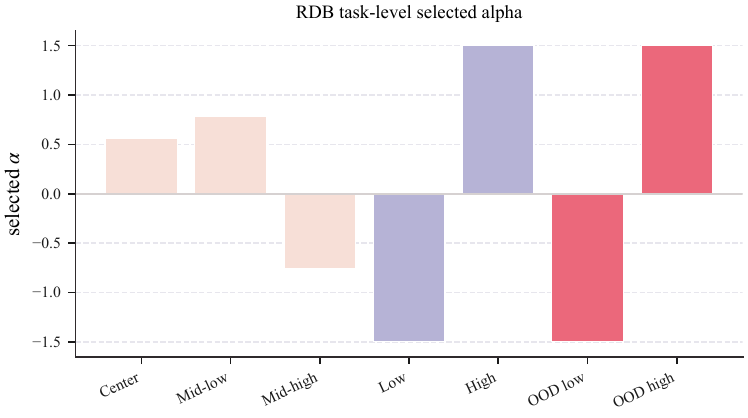}
\caption{RDB task-level selected $\alpha$.}
\label{fig:app-rdb-task-alpha}
\end{figure}

\begin{table}[!htbp]
\centering
\appdensefont
\caption{RDB selected $\alpha$ statistics by task group.
Endpoint and OOD tasks select large-magnitude coefficients, while support tasks show smaller and more variable selections. RDB is therefore treated as a prefix-calibrated regime with sample-specific coefficient selection.}
\label{tab:app-rdb-selected-alpha-groups}
\begin{tabular}{lccccc}
\toprule
\textbf{Task group} & \textbf{\# tasks} & \textbf{Mean $\alpha$} & \textbf{Std. $\alpha$} & \textbf{Mean $|\alpha|$} & \textbf{Future L2} \\
\midrule
Support & 3 & +0.20 & 0.83 & 0.70 & \sci{1.99}{-2} \\
Endpoint & 2 & +0.00 & 2.12 & 1.50 & \sci{2.85}{-2} \\
OOD & 2 & +0.00 & 2.12 & 1.50 & \sci{4.06}{-1} \\
All & 7 & +0.08 & 1.32 & 1.16 & \sci{1.33}{-1} \\
\bottomrule
\end{tabular}

\end{table}

\FloatBarrier
\subsection{Unified method ablation}
\label{sec:app:method-ablation}

The unified ablation compares static averaging, best fixed $\alpha$, wrong fixed $\alpha$, and conditional CCM on NS2D and RDB.
It measures whether conditioning changes the selected point on the merge line beyond what one fixed interpolation coefficient can provide.
Conditioning matters most when one fixed $\alpha$ cannot serve all regimes.

\begin{table}[!htbp]
\centering
\appdensefont
\caption{Unified method ablation on RDB and NS2D.}
\label{tab:app-method-ablation}
\begin{tabular}{llrrrr}
\toprule
\textbf{Benchmark} & \textbf{Method} & \textbf{Overall} & \textbf{Endpoint} & \textbf{OOD/ID} & \textbf{Worst} \\
\midrule
RDB high-center & Base & \sci{1.77}{-1} & \sci{1.22}{-1} & \sci{5.9}{-2} & \sci{5.34}{-1} \\
RDB high-center & Endpoint average & \sci{1.65}{-1} & \sci{7.8}{-2} & \sci{5.1}{-2} & \sci{5.09}{-1} \\
RDB high-center & Best fixed overall & \sci{1.63}{-1} & \sci{8.7}{-2} & \sci{5.1}{-2} & \sci{5.23}{-1} \\
RDB high-center & Best fixed endpoint & \sci{1.68}{-1} & \sci{7.7}{-2} & \sci{5.2}{-2} & \sci{5.00}{-1} \\
RDB high-center & Worst fixed & \sci{1.87}{-1} & \sci{1.08}{-1} & \sci{7.2}{-2} & \sci{4.83}{-1} \\
RDB high-center & CCM-Prefix & \sci{1.33}{-1} & \sci{2.8}{-2} & \sci{2.3}{-2} & \sci{4.83}{-1} \\
NS2D viscosity & Base & \sci{3.78}{-1} & \sci{3.10}{-1} & -- & \sci{6.78}{-1} \\
NS2D viscosity & Endpoint average & \sci{3.71}{-1} & \sci{3.05}{-1} & \sci{5.86}{-1} & \sci{7.17}{-1} \\
NS2D viscosity & Best fixed overall & \sci{3.68}{-1} & \sci{2.97}{-1} & \sci{5.90}{-1} & \sci{6.70}{-1} \\
NS2D viscosity & CCM-Coord & \sci{2.76}{-1} & \sci{2.65}{-1} & \sci{3.38}{-1} & \sci{3.47}{-1} \\
NS2D viscosity & CCM-Scale & \sci{2.53}{-1} & \sci{2.65}{-1} & \sci{3.38}{-1} & \sci{3.47}{-1} \\
NS2D viscosity & Worst fixed & \sci{5.42}{-1} & \sci{5.60}{-1} & \sci{5.44}{-1} & \sci{7.59}{-1} \\
\bottomrule
\end{tabular}

\end{table}

\FloatBarrier
\subsection{Task-level comparison against the base}
\label{sec:app:task-comparison}

Task-level comparison is reported separately from mean performance.
This separates average improvement from pointwise behavior relative to the base.
The table shows that average merge improves several merge-friendly DiffReact/FNO
settings, while harder domains include tasks where the base remains stronger.
CCM improves the mean and reduces regret, with pointwise behavior reported
explicitly at the task level.

\begin{table}[!htbp]
\centering
\appdensefont
\caption{Task-level comparison against the family base.}
\label{tab:app-task-comparison}
\begin{tabular}{llrrrr}
\toprule
\textbf{Experiment} & \textbf{Method} & \textbf{Win/Loss} & \textbf{Base} & \textbf{Method} & \textbf{Neg. regret} \\
\midrule
DiffReact f2 small-gap FNO & Endpoint average & 5/0 & \sci{1.27}{-1} & \sci{1.19}{-1} & 0 \\
DiffReact f2 dense FNO & Endpoint average & 13/0 & \sci{1.40}{-1} & \sci{1.32}{-1} & 0 \\
DiffReact f3 FNO small\_w48\_mo12\_l4 & Endpoint average & 5/0 & \sci{2.12}{-1} & \sci{1.92}{-1} & 0 \\
DiffReact f3 FNO base\_w64\_mo16\_l4 & Endpoint average & 5/0 & \sci{1.87}{-1} & \sci{1.49}{-1} & 0 \\
DiffReact f3 FNO large\_w80\_mo20\_l4 & Endpoint average & 5/0 & \sci{1.50}{-1} & \sci{1.13}{-1} & 0 \\
DiffReact f3 DPOT & Endpoint average & 3/2 & \sci{4.5}{-2} & \sci{3.9}{-2} & \sci{2}{-3} \\
RDB high-center future-only & Endpoint average & 5/2 & \sci{1.77}{-1} & \sci{1.65}{-1} & \sci{7}{-3} \\
RDB high-center future-only & CCM-Prefix & 5/2 & \sci{1.77}{-1} & \sci{1.33}{-1} & \sci{1}{-3} \\
RDB high-center future-only & CCM-Prefix+fallback & 5/2 & \sci{1.77}{-1} & \sci{1.32}{-1} & \sci{1}{-3} \\
NS2D viscosity & Endpoint average & 6/5 & \sci{3.78}{-1} & \sci{3.71}{-1} & \sci{2.1}{-2} \\
NS2D viscosity & CCM-Coord & 11/0 & \sci{3.78}{-1} & \sci{2.76}{-1} & 0 \\
NS2D viscosity & CCM-Scale & 11/0 & \sci{3.78}{-1} & \sci{2.53}{-1} & 0 \\
\bottomrule
\end{tabular}

\end{table}

\FloatBarrier
\subsection{RDB output-space ensemble reference}
\label{sec:app:output-ensemble}

Output-space ensembling is a multi-checkpoint reference because it keeps both endpoint experts alive and combines predictions at every rollout step.
Its cost profile differs from weight-space CCM: it requires two resident
checkpoints and roughly two endpoint forward passes per step.
On RDB, output-space CCM-Prefix nearly matches weight-space CCM-Prefix, indicating that the weight-space method captures much of the endpoint synergy with one composed checkpoint.

\begin{table}[!htbp]
\centering
\appdensefont
\caption{RDB output-space ensemble reference.}
\label{tab:app-output-ensemble}
\begin{tabular}{lcccrrrr}
\toprule
\textbf{Method} & \textbf{Ckpts} & \textbf{Infer.} & \textbf{Calib.} & \textbf{Overall} & \textbf{Endpoint} & \textbf{ID-5} & \textbf{Worst} \\
\midrule
Weight avg merge & 1 & 1x & none & \sci{1.65}{-1} & \sci{7.8}{-2} & \sci{5.1}{-2} & \sci{5.09}{-1} \\
Weight best fixed & 1 & 1x & fixed-alpha oracle & \sci{1.63}{-1} & \sci{8.7}{-2} & \sci{5.1}{-2} & \sci{5.23}{-1} \\
Weight CCM-Prefix & 1 & 1x & 4 rollout steps & \sci{1.33}{-1} & \sci{2.8}{-2} & \sci{2.3}{-2} & \sci{4.83}{-1} \\
Output avg & 2 & 2x & none & \sci{1.64}{-1} & \sci{8.3}{-2} & \sci{5.0}{-2} & \sci{5.10}{-1} \\
Output best fixed & 2 & 2x & fixed-alpha oracle & \sci{1.63}{-1} & \sci{8.9}{-2} & \sci{5.1}{-2} & \sci{5.23}{-1} \\
Output CCM-Prefix & 2 & 2x & 4 rollout steps & \sci{1.33}{-1} & \sci{3.2}{-2} & \sci{2.4}{-2} & \sci{4.83}{-1} \\
Output oracle per-task & 2 & 2x & oracle full rollout & \sci{1.32}{-1} & \sci{2.9}{-2} & \sci{2.2}{-2} & \sci{4.83}{-1} \\
\bottomrule
\end{tabular}

\end{table}

\FloatBarrier
\subsection{RDB alpha-bank resolution}
\label{sec:app:alpha-grid}

CCM-Prefix searches a finite $\alpha$ bank, and the $\alpha$-grid ablation varies the candidate resolution.
The 5-point, 9-point, and 13-point grids produce nearly identical CCM-Prefix performance under the same future-only evaluation protocol.

\begin{table}[!htbp]
\centering
\appdensefont
\caption{RDB $\alpha$-bank resolution sweep.}
\label{tab:app-alpha-grid}
\begin{tabular}{llrrrr}
\toprule
\textbf{Grid} & \textbf{Method} & \textbf{Overall} & \textbf{Endpoint} & \textbf{ID-5} & \textbf{Worst} \\
\midrule
5 & Best fixed & \sci{1.62706}{-1} & \sci{8.7393}{-2} & \sci{5.0929}{-2} & \sci{5.23030}{-1} \\
5 & Oracle & \sci{1.32851}{-1} & \sci{2.8456}{-2} & \sci{2.3514}{-2} & \sci{4.83108}{-1} \\
5 & CCM-Prefix & \sci{1.32850}{-1} & \sci{2.8456}{-2} & \sci{2.3512}{-2} & \sci{4.83108}{-1} \\
9 & Best fixed & \sci{1.62706}{-1} & \sci{8.7393}{-2} & \sci{5.0929}{-2} & \sci{5.23030}{-1} \\
9 & Oracle & \sci{1.31959}{-1} & \sci{2.8456}{-2} & \sci{2.2265}{-2} & \sci{4.83108}{-1} \\
9 & CCM-Prefix & \sci{1.32704}{-1} & \sci{2.8456}{-2} & \sci{2.3308}{-2} & \sci{4.83108}{-1} \\
13 & Best fixed & \sci{1.62706}{-1} & \sci{8.7393}{-2} & \sci{5.0929}{-2} & \sci{5.23030}{-1} \\
13 & Oracle & \sci{1.31938}{-1} & \sci{2.8384}{-2} & \sci{2.2236}{-2} & \sci{4.83108}{-1} \\
13 & CCM-Prefix & \sci{1.32708}{-1} & \sci{2.8473}{-2} & \sci{2.3314}{-2} & \sci{4.83108}{-1} \\
\bottomrule
\end{tabular}

\end{table}

\FloatBarrier
\subsection{RDB calibration-objective ablation}
\label{sec:app:calibration-objective}

The calibration objective ablation measures sensitivity to the exact prefix loss.
First-step, mean-step, final-step, recency-weighted, and complete-prefix L2 selectors all remain far better than the best fixed $\alpha$.
In the table, ``Complete prefix L2'' denotes the selector that uses all $K$ calibration steps for coefficient selection, not a reported full-rollout score.
These variants show that short rollout information provides a stable selection signal for the signed family direction.

\begin{table}[!htbp]
\centering
\appdensefont
\caption{RDB calibration-objective ablation.}
\label{tab:app-calibration-objective}
\begin{tabular}{lrrrr}
\toprule
\textbf{Selection objective} & \textbf{Overall} & \textbf{Endpoint} & \textbf{ID-5} & \textbf{Worst} \\
\midrule
Best fixed & \sci{1.62706}{-1} & \sci{8.7393}{-2} & \sci{5.0929}{-2} & \sci{5.23030}{-1} \\
Complete prefix L2 & \sci{1.32708}{-1} & \sci{2.8473}{-2} & \sci{2.3314}{-2} & \sci{4.83108}{-1} \\
Mean step L2 & \sci{1.32680}{-1} & \sci{2.8476}{-2} & \sci{2.3274}{-2} & \sci{4.83109}{-1} \\
First-step L2 & \sci{1.32920}{-1} & \sci{2.8458}{-2} & \sci{2.3610}{-2} & \sci{4.83109}{-1} \\
Final-step L2 & \sci{1.32602}{-1} & \sci{2.8491}{-2} & \sci{2.3165}{-2} & \sci{4.83109}{-1} \\
Recency-weighted L2 & \sci{1.32635}{-1} & \sci{2.8491}{-2} & \sci{2.3211}{-2} & \sci{4.83109}{-1} \\
\bottomrule
\end{tabular}

\end{table}

\FloatBarrier
\subsection{RDB bootstrap confidence intervals}
\label{sec:app-bootstrap}

The bootstrap intervals below are sample-level intervals under the same future-only evaluation protocol.
They quantify resampling variation for the measured RDB conditional gain, while the matched independent-seed reruns in Appendix~\ref{sec:app:seed-reruns} report targeted independent-seed replication.

\begin{table}[!htbp]
\centering
\appdensefont
\caption{RDB sample-level bootstrap confidence intervals.}
\label{tab:app-bootstrap}
\begin{tabular}{llrr}
\toprule
\textbf{Baseline $-$ CCM-Prefix} & \textbf{Metric} & \textbf{Improvement} & \textbf{95\% CI} \\
\midrule
Base & Overall & \sci{4.4105}{-2} & [\sci{4.3969}{-2}, \sci{4.4244}{-2}] \\
Best fixed overall & Overall & \sci{2.9998}{-2} & [\sci{2.9922}{-2}, \sci{3.0079}{-2}] \\
Best fixed endpoint & Endpoint & \sci{4.8356}{-2} & [\sci{4.7953}{-2}, \sci{4.8761}{-2}] \\
Best fixed ID & ID-5 & \sci{2.6880}{-2} & [\sci{2.6785}{-2}, \sci{2.6975}{-2}] \\
Best fixed overall & Worst & \sci{3.9922}{-2} & [\sci{3.9753}{-2}, \sci{4.0091}{-2}] \\
\bottomrule
\end{tabular}

\end{table}

\FloatBarrier
\subsection{Matched independent-seed reruns}
\label{sec:app:seed-reruns}

Matched three-seed reruns complement the sample-level bootstrap intervals in two targeted settings.
DiffReact dense-axis OOD evaluates the coordinate-law mechanism under independent endpoint reruns.
RDB future-only CCM-Prefix evaluates the prefix-calibration mechanism under independent endpoint-seed reruns.
These settings cover the two coefficient-selection mechanisms used in the paper: metadata coordinate selection and short-prefix calibration.
The table reports mean and standard deviation under the matched rerun protocol.

\begin{table}[!htbp]
\centering
\appdensefont
\caption{Matched independent expert-seed reruns for coordinate selection and prefix calibration.
For DiffReact, the primary metric is OOD full-rollout L2 on the dense physical axis; the CCM-Coord row uses the known coordinate $\alpha=s(\lambda)$, while the oracle row uses the per-task best $\alpha$ from the bank and matches CCM-Coord on this dense sweep.
For RDB, the primary metric is prefix-excluded future-rollout L2 evaluated after the $K=4$ observed prefix used by CCM-Prefix; the Overall L2 column remains the corresponding full family average reported by the rerun script.}
\label{tab:app-score8-seed-reruns}
\begin{tabular}{lllcc}
\toprule
\textbf{Domain} & \textbf{Policy} & \textbf{Selection} & \textbf{Primary metric} & \textbf{Overall L2} \\
\midrule
DiffReact dense & Base & no merge & \sci{1.88}{-1} $\pm$ \sci{0}{0} & \sci{1.4}{-1} \\
 & Endpoint average & fixed zero & \sci{1.84}{-1} $\pm$ \sci{3.98}{-4} & \sci{1.32}{-1} \\
 & Best fixed OOD & varies & \sci{1.83}{-1} $\pm$ \sci{3.52}{-4} & \sci{1.35}{-1} \\
 & CCM-Coord & $\alpha=s(\lambda)$ & \sci{8.64}{-2} $\pm$ \sci{1.75}{-4} & \sci{8.25}{-2} \\
 & Oracle bank & per-task best & \sci{8.64}{-2} $\pm$ \sci{1.75}{-4} & \sci{8.25}{-2} \\
RDB future-only & Best fixed overall & $\alpha=0.75$ & \sci{9.38}{-2} $\pm$ \sci{3.75}{-4} & \sci{1.68}{-1} \\
 & Best fixed OOD & varies & \sci{7.87}{-2} $\pm$ \sci{7.08}{-4} & \sci{1.74}{-1} \\
 & CCM-Prefix & $K=4$ & \sci{2.89}{-2} $\pm$ \sci{1.9}{-4} & \sci{1.33}{-1} \\
\bottomrule
\end{tabular}

\end{table}

\FloatBarrier
\subsection{Physics-aware metrics}
\label{sec:app:physics-compute}

Physics-aware metrics provide additional measurements of the rollout results.
Rollout L2 remains the primary training and evaluation target, and each table
preserves the full diagnostic profile for the compared policies.

\begin{table}[!htbp]
\centering
\appdensefont
\caption{Radial dam-break physics-aware metrics.
Future L2 denotes the prefix-excluded rollout L2 used by the RDB CCM-Prefix protocol; the remaining columns are additional rollout diagnostics.}
\label{tab:app-rdb-physics}
\begin{tabular}{lrrrr}
\toprule
\textbf{Method} & \textbf{Future L2 $\downarrow$} & \textbf{Mass MAE $\downarrow$} & \textbf{Std MAE $\downarrow$} & \textbf{Front MAE $\downarrow$} \\
\midrule
Base & \sci{1.768}{-1} & \sci{6.8}{-3} & \sci{7.4}{-3} & \sci{1.49}{-2} \\
Best fixed overall & \sci{1.627}{-1} & \sci{6.5}{-3} & \sci{7.1}{-3} & \sci{1.28}{-2} \\
Best fixed endpoint & \sci{1.677}{-1} & \sci{8.5}{-3} & \sci{8.6}{-3} & \sci{1.35}{-2} \\
Fixed ID & \sci{1.629}{-1} & \sci{6.7}{-3} & \sci{7.2}{-3} & \sci{1.30}{-2} \\
CCM-Prefix & \sci{1.327}{-1} & \sci{5.2}{-3} & \sci{3.8}{-3} & \sci{1.15}{-2} \\
\bottomrule
\end{tabular}

\end{table}

\FloatBarrier
\begin{table}[!htbp]
\centering
\appdensefont
\caption{NS2D physics-aware metrics.
Full L2 denotes full-rollout L2; the remaining columns are additional rollout diagnostics.}
\label{tab:app-ns2d-physics}
\begin{tabular}{lrrrr}
\toprule
\textbf{Method} & \textbf{Full L2 $\downarrow$} & \textbf{Vort. mean MAE $\downarrow$} & \textbf{Enstrophy MAE $\downarrow$} & \textbf{Final Enstrophy $\downarrow$} \\
\midrule
Base & \sci{3.780}{-1} & \sci{3.0}{-3} & \sci{3.172}{0} & \sci{5.896}{0} \\
Endpoint average & \sci{3.705}{-1} & \sci{3.0}{-3} & \sci{3.165}{0} & \sci{5.861}{0} \\
Best fixed overall & \sci{3.679}{-1} & \sci{2.8}{-3} & \sci{3.172}{0} & \sci{5.863}{0} \\
CCM-Coord & \sci{2.762}{-1} & \sci{3.0}{-3} & \sci{3.182}{0} & \sci{4.356}{0} \\
\bottomrule
\end{tabular}

\end{table}

\FloatBarrier
\section{Qualitative Case Studies}
\label{sec:app:case-studies}

Late-time qualitative panels show medium-gap family tasks.
These panels complement scalar error tables by showing how the controlled physical axis changes the visible PDE state after autoregressive errors have had time to accumulate.
Each panel uses a fixed seed and later time slices cropped from the full same-seed task strips.

Figure~\ref{fig:app-diffreact-late-medium-gap} shows the DiffReact $f_2$ medium-gap family.
The rows vary the $D_u$ coefficient while keeping the same seed and late-time frames.
Lower $D_u$ retains sharper spatial variation in the $u$ channel, whereas higher $D_u$ produces smoother late-time concentration fields.
This is the field-level counterpart of the quantitative $f_2$ result: the medium-gap axis is a visible physical coordinate that changes the long-horizon solution morphology.

\begin{figure}[!htbp]
\centering
\includegraphics[width=0.92\linewidth,height=0.46\textheight,keepaspectratio]{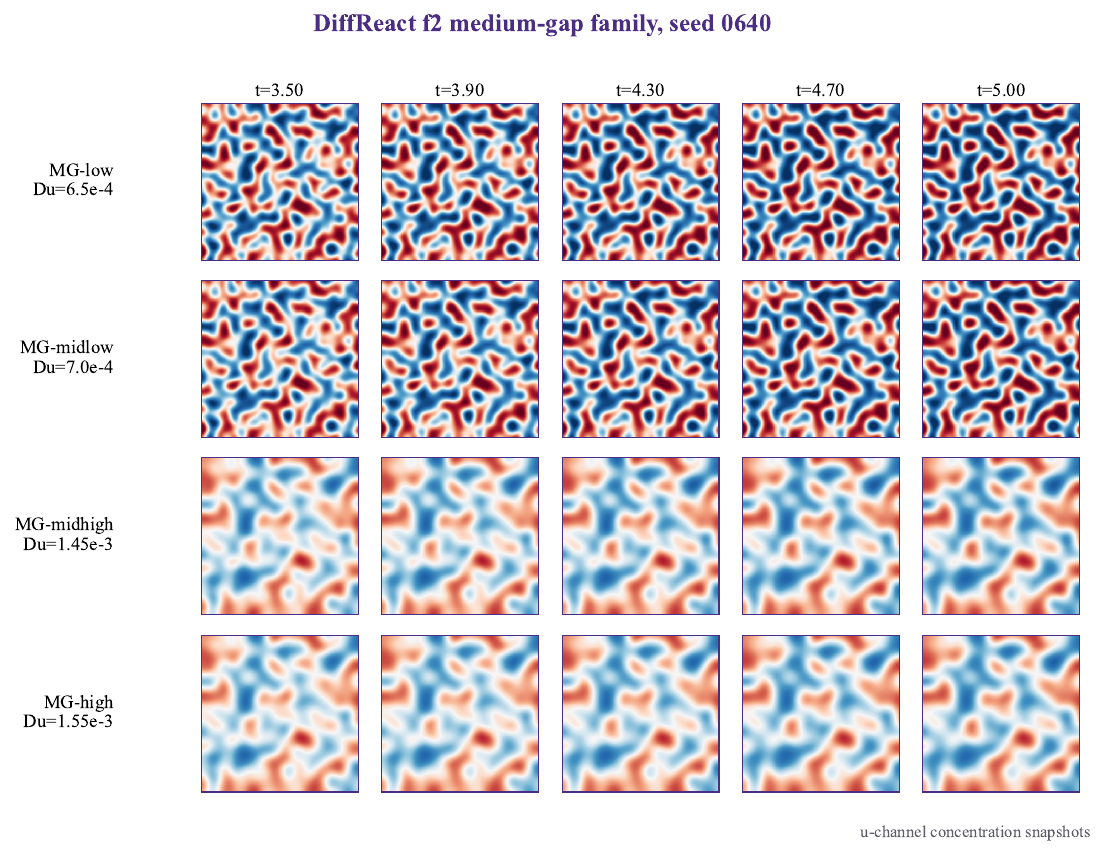}
\caption{\textbf{Late-time DiffReact $f_2$ medium-gap visualization.}
Rows correspond to four medium-gap $D_u$ tasks for the same seed.
Columns show late rollout frames from $t=3.50$ to $t=5.00$; the $u$ channel is
shown for compactness.}
\label{fig:app-diffreact-late-medium-gap}
\end{figure}

Figure~\ref{fig:app-ns2d-late-medium-gap} shows the NS2D viscosity family.
The two rows are the medium-gap low and high viscosity tasks.
At late times, the lower-viscosity trajectory retains sharper rolled-up vorticity filaments, whereas the higher-viscosity trajectory becomes visibly smoother.
This is the same physical axis used by the NS2D $\alpha$ law and CCM-Scale experiments; the qualitative panel therefore verifies that the medium-gap coordinate corresponds to a clear dynamical change in the field as well as a numerical label.

\begin{figure}[!htbp]
\centering
\includegraphics[width=0.92\linewidth,height=0.36\textheight,keepaspectratio]{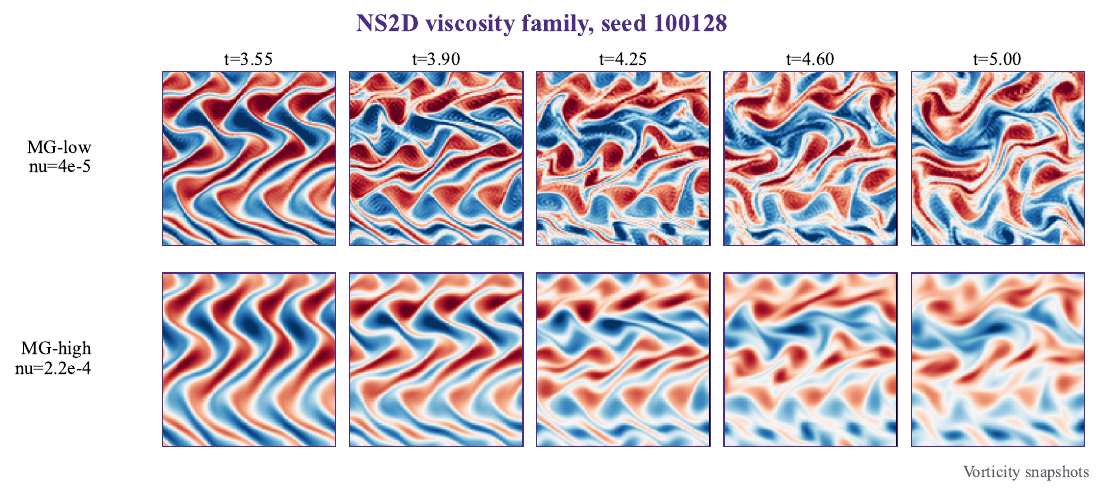}
\caption{\textbf{Late-time NS2D medium-gap visualization.}
Rows correspond to medium-gap low/high viscosity tasks for the same seed.
Columns show late rollout frames from $t=3.55$ to $t=5.00$.}
\label{fig:app-ns2d-late-medium-gap}
\end{figure}

Fig.~\ref{fig:app-rdb-late-medium-gap} shows the radial dam-break
high-center family.
Here the medium-gap coordinate changes the initial central height.
Late frames reveal the downstream effect on the expanding free-surface ring: larger heights produce stronger outward propagation and a larger low-height central basin.
This panel illustrates why RDB is a prefix-calibrated setting: the visible state changes are strong, while the signed family direction is less reliably determined by static metadata alone.

\begin{figure}[!htbp]
\centering
\includegraphics[width=0.92\linewidth,height=0.46\textheight,keepaspectratio]{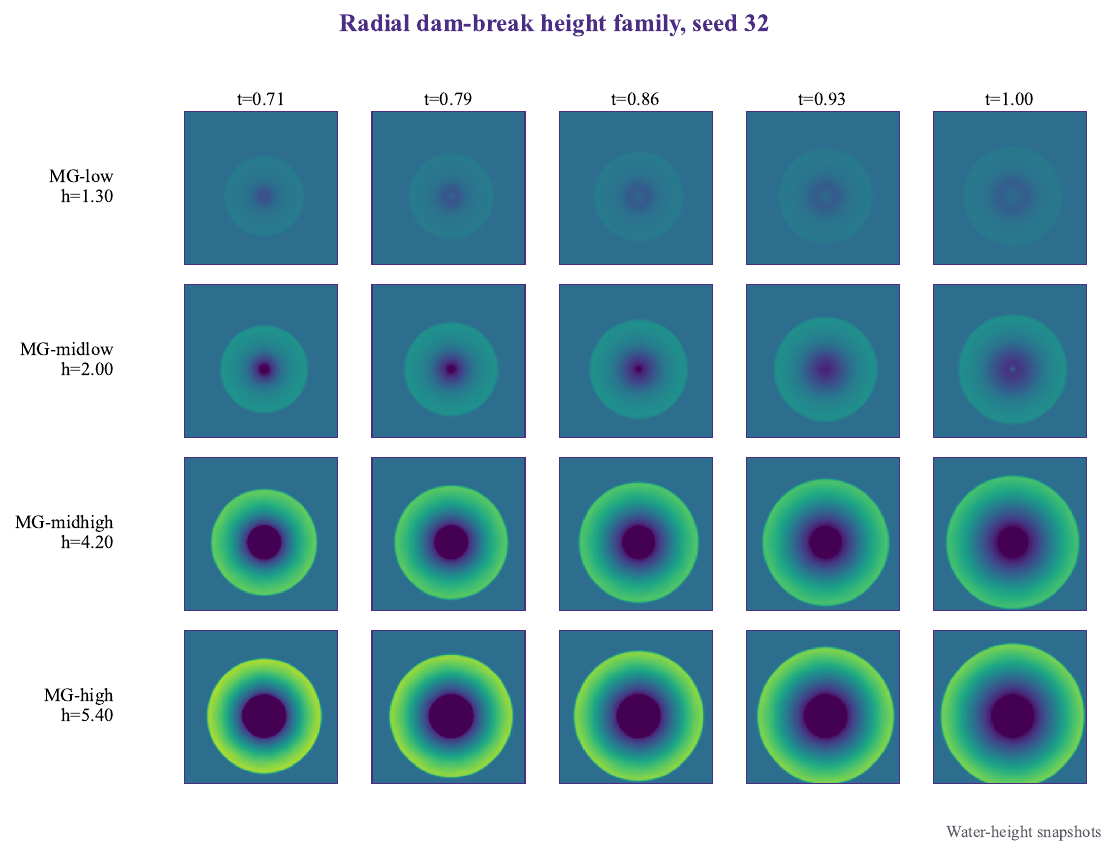}
\caption{\textbf{Late-time RDB medium-gap visualization.}
Rows correspond to four medium-gap height tasks for the same seed.
Columns show late rollout frames from $t=0.71$ to $t=1.00$.}
\label{fig:app-rdb-late-medium-gap}
\end{figure}

\end{document}